\title[On the Interventional Kullback-Leibler Divergence]{On the Interventional Kullback-Leibler Divergence}
\DeclareMathOperator*{\argmin}{arg\,min}
\DeclareMathOperator\supp{supp}
\newtheorem{example}[theorem]{Example}
\newtheorem*{lemma*}{Lemma}
\DeclareMathOperator{\Ex}{\ensuremath{\mathbb{E}}}
\newcommand{\KL}{D_{\mathrm{KL}}}
\newcommand{\IKL}{D_{\mathrm{IKL}}}
\newcommand{\TV}{D_{\mathrm{TV}}}
\newtheorem{assumption}[theorem]{Assumption}
\newcommand{\indep}{\perp\!\!\!\!\perp} 
\begin{document}

\maketitle

\begin{abstract}
Modern machine learning approaches excel in static settings where a large amount of i.i.d.\ training data are available for a given task. In a dynamic environment, though, an intelligent agent needs to be able to transfer knowledge and re-use learned components across domains. It has been argued that this may be possible through causal models, aiming to mirror the modularity of the real world in terms of independent causal mechanisms. However, the true causal structure underlying a given set of data is generally not identifiable, so it is desirable to have means to quantify differences between models (e.g., between the ground truth and an estimate), on both the observational and interventional level.

In the present work, we introduce the Interventional Kullback-Leibler (IKL) divergence to quantify both structural and distributional differences between models based on a finite set of multi-environment distributions generated by interventions from the ground truth.  Since we generally cannot quantify all differences between causal models for every finite set of interventional distributions, we propose a sufficient condition on the intervention targets to identify subsets of observed variables on which the models provably agree or disagree.
\end{abstract}

\begin{keywords}%
   causal distance, causal discovery, multi-environment learning
\end{keywords}

\section{Introduction}
Classical machine learning methods are concerned with learning a distribution $P$ (or properties thereof) from a large i.i.d.\ sample using a simpler model %
$Q$. Recent advances in causality have shown how access to heterogeneous data from multiple environments can help uncover causal relationships within the data and address problems like adversarial robustness or domain generalization \citep{causality_peters,domain_adaption_zhang,self-supervised-julius,towards_causal,probable_domain_cian, 2022arXiv220315756G}. In this framework, distributional shifts emerge from interventions in some underlying causal model $\mathscr P$. Identifying the true causal model, however, is difficult even in the multi-environment regime, requiring many environments and (often unverifiable) assumptions on the type of interventions present. One thus often needs to be satisfied with an approximate model $\mathscr Q$, which is \emph{close} to a hypothetical ground truth $\mathscr{P}$.

In the present work, we propose a quantitative notion of observational and interventional \emph{closeness} between two causal models $\mathscr P$ and $\mathscr Q$. Extending the classical Kullback-Leibler divergence, our {\em Interventional Kullback-Leibler divergence} $\IKL(\mathscr{P}_{\mathcal E} \lVert \mathscr Q)$ is computed between a finite set of interventional distributions with known intervention targets $((P^e, \mathcal I_e))_{e\in \mathcal E}$ from $\mathscr P$, denoted as $\mathscr{P}_{\mathcal E}$, and our model estimate $\mathscr Q$, consisting of a single reference distribution estimate $Q$ and graphical model estimate $G_Q$. 
Intuitively, we can understand $\IKL$ as quantifying the average deviation between distributions under interventions and our model's predictions, had the interventions also happened on our model estimate. We thus quantify how good our model estimate is beyond observational.

Minimizing the IKL divergence over our estimate $\mathscr Q$ with respect to arbitrary interventions is equivalent to finding the true causal model $\mathscr P = \mathscr Q$. In practice, having access to only a limited amount of multi-environment distributions presents interesting challenges regarding which structural differences are identifiable. We propose a sufficient condition on the intervention targets trading-off a priori knowledge about $\mathscr P$ to establish the equivalence $\IKL(\mathscr{P}_{\mathcal E} \lVert \mathscr Q)=0  \Longleftrightarrow \mathscr P = \mathscr Q$. If this condition is not known to be satisfied, we further show which differences with respect to subsets of variables can be provably identified.

Previous approaches \citep{SHD1,SHD2, SID} define interventional distances via differences between two causal graphs. In contrast, our IKL divergence measure (1) does not require access to the true causal graph $G_P$ and (2) also takes into account distributional differences. Further, unlike \citet{acharya_learning_testing_causal, ladder}, there is no need to experimentally perform (hard-)interventions for the evaluation of our distance; instead we allow for arbitrary unknown mechanism changes, as long as the targets of the interventions are discerned.

The remainder of this work is organized as follows: In Section~\ref{sec:problem_settings}, we introduce the problem settings and the assumptions to set the stage for the following theoretical analysis.
Section \ref{sec:interv_dist} contains a discussion about related work and our main results, studying the Interventional Kullback-Leibler divergence first for the example of a known graph and then in the general setting. Finally, in Section~\ref{sec:discussion}, we discuss the operational significance of the IKL divergence and various applications related to the area of multi-environment causal learning. Summarizing, our  main contributions are:
\begin{itemize}
    \item We introduce the Interventional Kullback-Leibler divergence $\IKL$ that quantifies distributional and structural differences between a given set of multi-environment distributions $(P^e)_{e\in \mathcal E}$ from the ground truth $\mathscr P$ and our inferred model $\mathscr Q$.
    \item We discuss various properties and applications of the IKL divergence related to tasks in causal inference. 
    \item We prove a theorem establishing equivalence between $\IKL(\mathscr{P}_{\mathcal E}\lVert \mathscr{Q}) = 0$ and equality between causal models $\mathscr P = \mathscr Q$ under certain conditions on the environments. We thereby trade off (partial) knowledge about $\mathscr P$ and access to interventional distributions $(P^e)_{e\in \mathcal E}$. This gives rise to a sufficient condition that can be verified using only the inferred model $\mathscr Q$.
\end{itemize}

\section{Problem Setting and Preliminaries}
\label{sec:problem_settings}

We begin by reviewing the causal formalism used below. This is based on interventional distribution shifts, modelled as causal graphical models \citep{causality_pearl}. 
\subsection{Causal Terminology}
\begin{definition}[Causal Graphical Model]
    A Causal Graphical Model (CGM) $(P, G_P)$ over a variable set ${\bf X} = \{X_1, \ldots, X_d\}$ consists of a distribution $P$ and a directed acyclic graph (DAG) $G_P = ({\bf V}, {\bf E})$ where ${\bf V} := \{1, \ldots, d\}$ is an index set and ${\bf E} \subseteq {\bf V}^2$ is an edge set, such that $(i, j) \in {\bf E}$ \text{if and only if} $X_i\text{ directly causes } X_j$. 
    We say that two CGMs $(P, G_P), (Q, G_Q)$ are the same $(P, G_P) = (Q, G_Q)$ if and only if $P=Q$ and $G_P = G_Q$.
\end{definition}

\begin{definition}[Markov Factorization]
\label{def:markov_factorization}
Given a joint distribution $P$ and a DAG $G$, we say $P$ satisfies the Markov factorization property\footnote{We assume that all distributions have densities with respect to the Lebesgue measure.} with respect to $G$ (or is Markovian with respect to $G$) if 
\begin{equation}
\label{eq:markovfactorization}
    P(X_1, \ldots, X_d) = \prod_i P(X_i \ | \ {\bf PA}_i^G ),
\end{equation}
where ${\bf PA}_i^G$ denotes the set of parents of $X_i$ in $G$.
\end{definition}

\noindent While Definition \ref{def:markov_factorization} allows us to deduce properties of the distribution $P$ from the corresponding graph $G_P$, the following faithfulness property allows us to perform inference in the converse direction, i.e.\ finding the structure of the graph given the distribution. 
\begin{definition}[Causal Faithfulness]\label{assum:faithful}
    A joint distribution $P$ is called faithful with respect to a DAG $G$ if any conditional independence relationship in $P$ is implied by $d$-separation in $G$ \citep{spirtes_1993,causality_peters}.
\end{definition}
\begin{assumption}
    In the following, we consider the set of distributions that are both Markovian and faithful with respect to some causal DAG $G$, i.e. given an observed distribution $P$, there exists a compatible graph $G_P$.
\end{assumption}
\noindent  A justification for the faithfulness assumption is given by \citet[Theorem 3.5]{Koller2010} stating that almost all distributions $P$ that are Markovian with respect to $G$ are also faithful to $G$.

\subsection{Multi-environment learning} Throughout this work, we take advantage of data from multiple environments.
Specifically, we assume that given a set of multi-environment distributions $(P^e)_{e \in \mathcal{E}}$ over some variable set ${\bf X}$, there exists an underlying ground truth CGM $\mathscr P := (P, G_P)$ giving rise to  $(P^e)_{e \in \mathcal{E}}$. The generative process behind these distributions is assumed to satisfy the below \citep{schoel_causal_and_anticausal,causality_peters}:

\begin{assumption}[Independent Causal Mechanisms (ICM)]\label{assum:ICM}
A causal generative process of a system of variables is composed of autonomous modules that do not inform or influence each other. Mathematically speaking, given a Markov factorization of the joint distribution as in \eqref{eq:markovfactorization}, the causal conditionals (also called {\em mechanisms}) $P(X_i \ | \ {\bf PA}_i^{G_P} )$ represent autonomous modules in the sense that 
\begin{enumerate}
    \item[(i)] intervening on one $P(X_i \ | \ {\bf PA}_i^{G_P} )$ will not change other mechanisms $P(X_j \ | \ {\bf PA}_j^{G_P} ), i\neq j$,
    \item[(ii)] knowing one mechanism will not provide information about other mechanisms. 
\end{enumerate}
\end{assumption}

\noindent Under Assumption~\ref{assum:ICM}, environment shifts act independently on each of the mechanisms $P(X_i \ | \ {\bf PA}_i^{G_P})$. Thus, for each environment, $e\in {\mathcal E}$ there exists a well-defined set of mechanisms that are altered by the environment.
\begin{assumption}[Multi-Environment Distributions] \label{assum:multi-env}
For each environment $e \in \mathcal{E}$, `Nature' first chooses a subset of variables to intervene upon $\mathcal{I}_e \subseteq [d]$, where $[d]:= \{1, \dots, d\}$. For each intervened variable, `Nature' then chooses a mechanism shift $\tilde P(X_i \ | \  {\bf PA}_i) \in \mathcal{F}$ to perform, where $\mathcal F$ is a family of functions.
The transition from $P$ to $P^e$ thus maps $P(X_i \  | \ {\bf PA}_i) \mapsto \tilde{P}(X_i \ | \ {\bf PA}_i)$ if $i \in \mathcal I_e$ and $P(X_i \ | \ {\bf PA}_i) \mapsto P(X_i \ | \ {\bf PA}_i)$ otherwise, and we may represent the joint distribution in environment $e$ as:
    \begin{equation}
        P^e(X_1, \ldots, X_d) = \prod_{i \in \mathcal I^e} \tilde P(X_i \ | \ {\bf PA}_i) \cdot \prod_{i \in [d] \backslash \mathcal I^e} P(X_i \ | \ {\bf PA}_i) 
    \end{equation}
\end{assumption}

\noindent Such mechanism shifts arise when the context of the observed distribution changes. This may happen for example as the result of different outer circumstances like climate or time \citep{climate_time} or by explicitly intervening on an experiment. Such shifts are generally called \emph{interventions}. Special cases include \emph{soft interventions} where the structure of the graph is left invariant, i.e., $X_i$ does not become conditionally independent of any of its parents, and \emph{hard interventions} where $\tilde P(X_i \ | \ {\bf PA}_i) = \delta(X_i - x)$. In any case, $P^e$ will still be Markovian with respect to the ground truth graph $G_P$. However, it may no longer be faithful to it, unless the interventions are soft. 

Our final assumption concerns the existence of hidden confounder variables. We here relax the causal sufficiency assumption (stating that there are no unobserved confounders) to pseudo causal sufficiency. This ensures in particular that there are no changes in observed distributions that are not explained by the environment and the causal graph $G_P$.

\begin{assumption}[Pseudo causal sufficiency]\label{asum:pseudo-causal}
    The effect of any unobserved confounder can be completely explained by the environment variable, i.e.\ in any given environment $e \in \mathcal E$ potential unobserved confounders are fixed, and causal sufficiency is satisfied \citep{Huang_causal_discovery_from_het}.
\end{assumption}

\section{The Interventional KL divergence} \label{sec:interv_dist}
In this section, we introduce our proposed Interventional KL divergence. We begin by reviewing the problem of defining a distance between two CGMs $\mathscr P = (P, G_P)$ and $\mathscr Q =(Q, G_Q)$ and discussing related work. We then introduce the IKL divergence first for a known true causal graph $G_P$. This definition is subsequently generalized to an unknown $G_P$. Finally, we establish the equivalence $\mathscr P = \mathscr Q \Longleftrightarrow \IKL({\mathscr P}_{\mathcal E} \lVert \mathscr Q) = 0$, and derive a corollary in a partial information setting.
Thereby, $\mathscr Q$ takes the role of an `estimate' for the unknown $\mathscr P$, i.e.\ we know $G_Q$ and have access to and can evaluate the density of $Q$. For complete proofs of the results in this section, we refer to Appendix~\ref{appendix:proofs}. 
\subsection{Related Work}
\label{subsec:related_work}
Defining a general distance between two causal graphical models $(P, G_P), (Q, G_Q)$ requires comparing them both with respect to distributional differences between $P, Q$ and structural differences between their underlying causal graphs $G_P, G_Q$. Early work addressing this problem considered only structural differences. The Structural Hamming Distance (SHD) \citep{SHD1, SHD2} counts the number of different edges between two graphs, which has been found to provide limited insights about the effect on interventional distributions \citep{SID}. They instead proposed the Structural Intervention Distance (SID) to address this limitation by counting instead the number of different interventional distributions induced by two causal graphs. If the true causal graph $G_P$ is unknown, it is generally unclear how to define a valid distance measure between the causal models. One avenue to solve this problem is leveraging multi-environment distributions that
under Assumptions \ref{assum:multi-env} and \ref{asum:pseudo-causal} 
can provide insights about the structural properties of the true causal graph. \cite{ladder, acharya_learning_testing_causal} propose a score for comparing general causal graphical models based on their distributional distance after performing hard interventions on both of them. In section \ref{sec:interventional-unknown}, we show that under these assumptions, the Interventional KL divergence entails their distance as a special case.

Another related area of research is that of multi-environment causal discovery, which seeks to uncover the true causal graph $G_P$ by relaxing the i.i.d.\ assumption to exchangeable data \citep{2022arXiv220315756G} or interventional data \citep{pmlr-v2-eaton07a, NEURIPS2018_6ad4174e, multi-env-unknown-he-geng}. Since these methods usually only provide probabilistic guarantees about recovering the true causal graph, a causal distance metric can be seen as an evaluation tool to track their progress.

Finally, there is related work discussing the identifiability of causal effects and interventional Markov equivalence classes \citep{tian_pearl, hauser_buehlmann_imarkov, hauser_2015_imarkov}. 
Our work is tangent to this line of research in that we also seek to understand which kind of interventions are necessary in a general multi-environment setting to distinguish causal models and define a valid distance measure between them.

\subsection{The IKL divergence for a known causal graph}
Before defining the Interventional KL divergence for general causal models $\mathscr P = (P, G_P) , \mathscr Q = (Q, G_Q)$ in the next section, we build intuition by first discussing the case of a known ground truth causal graph $G_P$ of $\mathscr{P}$. This means that $G_Q = G_P$, and $Q$ is Markovian with respect to $G_P$. Furthermore, any differences between $\mathscr P, \mathscr Q$ are given by distributional differences between $P,Q$.

Throughout this work, we use the Kullback-Leibler divergence for quantifying distributional differences between $P$ and $Q$, assuming that $P$ is absolutely continuous with respect to $Q$: 
\begin{equation}
    \KL(P({\bf X}) \lVert Q({\bf X})) = \int_{\mathcal{X}} p({\bf x}) \log \frac{p({\bf x})}{q({\bf x})} d{\bf x},
\end{equation}
It is known that $\KL(P({\bf X}) \lVert Q({\bf X})) = 0$ if and only if $P=Q$. Now suppose that $P^e$ arises from $P$ via a mechanism shift on the variables ${\bf X}_{\mathcal I_e}$ for a subset $\mathcal I_e \subseteq [d]$, i.e.\ 
\begin{equation}\label{eq:change-multi}
P^e(X_i |{\bf PA}_i^{G_P}) =  P(X_i | {\bf PA}_i^{G_P}) \quad  \text{if and only if} \quad i \in [d] \backslash  \mathcal I_e.
\end{equation} 
From the chain rule for relative entropy (see e.g.\ \cite[Thm 2.5.3]{cover_thomas}, \citep{Budhathoki2021}), we can deduce the following decomposition to understand how the effect of the mechanism shifts shows up in $\KL(P^e({\bf X})\lVert Q({\bf X}))$. From here on, we denote $\mathbb{E}_{x \sim P(x)}[X]$ as $\mathbb{E}_{P(x)}[X]$. 
\begin{restatable}{lemma}{lemKLone}\label{lem:KL1}
     Given causal model $\mathscr{P} = (P, G_P)$
     assume that $P^e$ emerges from $P$ via an environment shift according to Assumption \ref{assum:multi-env}. If $Q$ is Markovian with respect to $G_P$, we have the following decomposition
    \begin{equation}
    \label{eq:kl-decomposition-obs-lemma-7}
    \begin{split} 
    \KL(P^e({\bf X}) \lVert Q({\bf X}))  &= \sum_{i\in \mathcal I_e} \Ex\limits_{P^e({\bf pa}_i^{G_P})}\left[\KL\left(P^e(X_i \ | \ {\bf pa}_i^{G_P}) \Vert Q(X_i \ |  \ {\bf pa}_i^{G_P})\right)\right] \\
    & \hspace{.5cm} +\underbrace{\sum_{i\in[d]\backslash\mathcal I_e} \Ex\limits_{P^e({\bf pa}_i^{G_P})}\left[\KL\left(P(X_i \ | \ {\bf pa}_i^{G_P}) \Vert Q(X_i \ |  \ {\bf pa}_i^{G_P})\right)\right]}_{=:\IKL(\mathscr{P}_{\{e\}}\lVert \mathscr Q)}
    \end{split}
\end{equation}
As a special case we have for $\mathcal I_e = \emptyset $:
\begin{equation}\label{eq:kl-decomposition-special-lemma-7}
    \KL(P({\bf X}) \lVert Q({\bf X}))  = \sum_{i\in [d]} \Ex\limits_{P({\bf pa}_i^{G_P})}\left[\KL\left(P(X_i \ | \ {\bf pa}_i^{G_P}) \Vert Q(X_i \ |  \ {\bf pa}_i^{G_P})\right)\right]
\end{equation}

\end{restatable}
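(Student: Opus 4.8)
The plan is to reduce the statement to the chain rule for relative entropy, exploiting the fact that both $P^e$ and $Q$ factorize over the \emph{same} DAG $G_P$. As the text notes, an intervention of the type in Assumption~\ref{assum:multi-env} preserves the Markov property, so $p^e(\mathbf{x}) = \prod_i p^e(x_i \mid \mathbf{pa}_i^{G_P})$, while by hypothesis $q(\mathbf{x}) = \prod_i q(x_i \mid \mathbf{pa}_i^{G_P})$. I would start from the integral definition of $\KL(P^e(\mathbf{X}) \Vert Q(\mathbf{X}))$, substitute both factorizations inside the logarithm, and use $\log\prod = \sum\log$ to rewrite the log-density-ratio as $\sum_i \log \frac{p^e(x_i \mid \mathbf{pa}_i^{G_P})}{q(x_i \mid \mathbf{pa}_i^{G_P})}$. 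By linearity of the integral this converts the divergence into a sum of $d$ integrals, one per vertex.

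The second step is to collapse each of these $d$ integrals to an integral over only $(x_i, \mathbf{pa}_i^{G_P})$. Because the $i$-th summand depends on $\mathbf{x}$ solely through $x_i$ and its parents, I would marginalize out the remaining coordinates, using $\int p^e(\mathbf{x})\, d\mathbf{x}_{\mathrm{rest}} = p^e(x_i, \mathbf{pa}_i^{G_P})$, and then factor $p^e(x_i, \mathbf{pa}_i^{G_P}) = p^e(x_i \mid \mathbf{pa}_i^{G_P})\, p^e(\mathbf{pa}_i^{G_P})$. Each term then reads $\Ex_{P^e(\mathbf{pa}_i^{G_P})}[\KL(P^e(X_i \mid \mathbf{pa}_i^{G_P}) \Vert Q(X_i \mid \mathbf{pa}_i^{G_P}))]$. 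The one point worth flagging is that the weighting over parents is the \emph{interventional} marginal $P^e(\mathbf{pa}_i^{G_P})$, not the observational one; it emerges automatically from this marginalization.

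Finally, I would partition the index set into $\mathcal{I}_e$ and $[d]\setminus\mathcal{I}_e$ and invoke \eqref{eq:change-multi}: for $i \in [d]\setminus\mathcal{I}_e$ the mechanism is unchanged, so $P^e(X_i \mid \mathbf{pa}_i^{G_P}) = P(X_i \mid \mathbf{pa}_i^{G_P})$ and the inner divergence may be written with $P$ in place of $P^e$, producing the bracketed $\IKL(\mathscr{P}_{\{e\}} \Vert \mathscr{Q})$ summand; the terms with $i \in \mathcal{I}_e$ retain the shifted conditional $P^e$. The special case is then immediate: setting $\mathcal{I}_e = \emptyset$ forces $P^e = P$, so all $d$ terms fall into the second sum and \eqref{eq:kl-decomposition-special-lemma-7} follows.

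I expect no deep obstacle here; the conceptual content is simply that sharing the graph $G_P$ lets the global KL telescope into a sum of per-mechanism KL divergences weighted by the interventional parent distribution. The only routine care needed is (a) checking that absolute continuity $P^e \ll Q$ descends to each conditional, so that every $\KL(P^e(X_i \mid \mathbf{pa}_i^{G_P}) \Vert Q(X_i \mid \mathbf{pa}_i^{G_P}))$ is well-defined, and (b) justifying the interchange of the finite sum with the integral and the marginalization step via Fubini.
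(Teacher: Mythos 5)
Your proposal is correct and follows essentially the same route as the paper: the authors prove a general decomposition lemma (their Lemma~\ref{lem:technical}) whose proof is exactly your expansion of the log-ratio of the factorizations followed by marginalization to the parent marginals, and then specialize it to Lemma~\ref{lem:KL1} by noting that the residual term $\KL(P^e \lVert \pi_{G_P}(P^e))$ vanishes because $P^e$ remains Markovian with respect to $G_P$ under Assumption~\ref{assum:multi-env} --- the fact you use directly when you write $p^e(\mathbf{x}) = \prod_i p^e(x_i \mid \mathbf{pa}_i^{G_P})$. Your final step of partitioning the index set and invoking \eqref{eq:change-multi} to replace $P^e$ by $P$ on the un-intervened conditionals matches the paper's ``splitting up the sums.''
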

\noindent The first sum in equation (\ref{eq:kl-decomposition-obs-lemma-7}) goes over the intervened mechanisms and therefore fully characterizes the changes that are directly due to environment shifts. Note that it vanishes if we could perfectly replicate the environment shift on our model estimate $Q$.
The second sum, which is a special case of the Interventional KL divergence to be defined more generally below, goes over the non-intervened variables, quantifying the distributions' distance resulting downstream from intervened variables due to the fact that the expectations over parents are taken with respect to intervened distributions.

Now assume that $P$ is unobserved, but we still want to understand whether $Q$ is a valid model. Since, $P, Q$ are Markovian with respect to the same causal DAG $G_P$, we can express agreement between $P$ and $Q$ equivalently in terms of vanishing Interventional KL divergence $\IKL(\mathscr{P}_{\{e\}} \lVert\mathscr Q)$. First suppose that $P =Q$. The first sum in equation (\ref{eq:kl-decomposition-obs-lemma-7}) can become arbitrarily large, inflating the KL divergence  $\KL(P^e({\bf X}) \lVert Q({\bf X}))$. 
However, the second sum in equation (\ref{eq:kl-decomposition-obs-lemma-7}) vanishes, i.e.\, $\IKL(\mathscr{P}_{\{e\}} \lVert \mathscr Q)=0$, since agreement between the conditional distributions is irrespective of the parent distributions taken in the expectation.\footnote{Specifically, they need to coincide on the union of support sets $\supp (P({\bf PA}_i^{G_P}))\cup \supp (P^e({\bf PA}_i^{G_P}))$}
The IKL divergence therefore discounts the additional difference between $P^e$ and $Q$ introduced by the environment shift. 

Conversely, assume that $\IKL(\mathscr{P}_{\{e\}}\lVert \mathscr Q)=0$. It is clear that this does not generally imply that $P=Q$, since we only sum over a subset of the local divergences composing the difference between the joint distribution $P$ and $Q$, as shown in equation (\ref{eq:kl-decomposition-special-lemma-7}).
But given multiple interventional distributions $(P^e)_{e\in \mathcal E}$, such that each mechanism constituting $P({\bf X})$ remains unchanged in (at least) one environment, we can obtain a valid distance measure between $P$ and $Q$ via averaging. These observations are formalized in the following definition and lemma. 

\begin{definition}[Interventional KL divergence for shared causal graphs] \label{def:interv-div-known}
    Let $\mathscr P = (P, G_P), \mathscr Q = (Q, G_P)$ be two causal models sharing the same causal graph. Further, assume that we have access to a set of interventional distributions $\mathscr P_{\mathcal E} = \left((P^e, \mathcal I_e)\right)_{e\in \mathcal E}$ generated from $\mathscr P$.  We define the Interventional KL divergence as
    \begin{equation}
        \begin{split}
        \IKL(\mathscr{P}_{\mathcal E}\lVert \mathscr Q) &= \frac{1}{|\mathcal E|}\sum_{e\in \mathcal E}\sum_{i \in [d] \backslash \mathcal I_e}\Ex\limits_{P^e({\bf pa}_i^{G_P})}\left[\KL\left(P^{e}(X_i \ | \ {\bf pa}_i^{G_P}) \Vert Q(X_i \ |  \ {\bf pa}_i^{G_P})\right)\right] \label{eq:interv-def-known} \\
        & \overset{(\ref{eq:change-multi})}{=} \frac{1}{|\mathcal E|}\sum_{e\in \mathcal E}\sum_{i \in [d] \backslash \mathcal I_e}\Ex\limits_{P^e({\bf pa}_i^{G_P})}\left[\KL\left(P(X_i \ | \ {\bf pa}_i^{G_P}) \Vert Q(X_i \ |  \ {\bf pa}_i^{G_P})\right)\right] 
        \end{split}
    \end{equation}
  If $\IKL(\mathscr{P}_{\mathcal E}\lVert \mathscr Q) = 0$, we say that our model $\mathscr Q$ is interventionally equivalent with $\mathscr P$ with respect to $\mathcal E$, denoted as $\mathscr P \sim_{\mathcal E} \mathscr Q$.
\end{definition}
\begin{restatable}{lemma}{lemintervequivlight}\label{lem:interv-equiv-light}
Under Assumptions~\ref{assum:multi-env}, \ref{asum:pseudo-causal} and that no variable is always intervened upon, i.e. $\bigcap_{e\in \mathcal E} \mathcal I_e = \emptyset$, we can conclude that $\mathscr P, \mathscr Q$ are interventionally equivalent, 
    $\mathscr P \sim_{\mathcal E} \mathscr Q$,
    if and only if $\mathscr P = \mathscr Q$.
\end{restatable}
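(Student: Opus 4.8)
The plan is to prove the biconditional directly after reducing it to a statement about the two distributions. Since $\mathscr P=(P,G_P)$ and $\mathscr Q=(Q,G_P)$ already share the graph $G_P$, the equality $\mathscr P=\mathscr Q$ is equivalent to $P=Q$, while $\mathscr P\sim_{\mathcal E}\mathscr Q$ means exactly $\IKL(\mathscr P_{\mathcal E}\lVert\mathscr Q)=0$. One direction is immediate: if $P=Q$ then all causal conditionals coincide, so every inner $\KL$ appearing in the definition~(\ref{eq:interv-def-known}) vanishes and hence $\IKL(\mathscr P_{\mathcal E}\lVert\mathscr Q)=0$.

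For the converse, suppose $\IKL(\mathscr P_{\mathcal E}\lVert\mathscr Q)=0$. Because $\KL\ge 0$ and the expectation of a nonnegative integrand is itself nonnegative, (\ref{eq:interv-def-known}) writes $\IKL$ as a sum of nonnegative terms, so every term must vanish: for all $e\in\mathcal E$ and all $i\in[d]\setminus\mathcal I_e$, $\Ex_{P^e({\bf pa}_i^{G_P})}[\KL(P(X_i\mid{\bf pa}_i^{G_P})\Vert Q(X_i\mid{\bf pa}_i^{G_P}))]=0$. A nonnegative quantity with zero expectation is zero almost everywhere, and $\KL(\cdot\Vert\cdot)=0$ precisely when its two arguments agree, so $P(X_i\mid{\bf pa}_i^{G_P})=Q(X_i\mid{\bf pa}_i^{G_P})$ for $P^e$-almost every parent configuration. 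Here the hypothesis $\bigcap_{e\in\mathcal E}\mathcal I_e=\emptyset$ does the essential combinatorial work: every variable $i$ is not intervened upon in at least one environment $e_i$, so the matching of conditionals is secured for all $i\in[d]$. I would then invoke the Markov factorization of $P$ and $Q$ with respect to the shared $G_P$ --- valid under pseudo causal sufficiency (Assumption~\ref{asum:pseudo-causal}), which makes the factorization of Assumption~\ref{assum:multi-env} exact --- and multiply the matched conditionals to obtain $P({\bf x})=\prod_iP(x_i\mid{\bf pa}_i^{G_P})=\prod_iQ(x_i\mid{\bf pa}_i^{G_P})=Q({\bf x})$ on $\supp(P)$, closing the argument with $\int P=\int Q=1$ to upgrade agreement on $\supp(P)$ into $P=Q$.

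The step I expect to be the main obstacle is the support bookkeeping concealed in the last move. The vanishing IKL forces the conditionals to agree only on $\supp(P^{e_i}({\bf PA}_i^{G_P}))$, whereas the factorization identity for $P$ requires agreement on $\supp(P({\bf PA}_i^{G_P}))$; since a mechanism shift on an ancestor of $i$ can shift or shrink the parent support, these two sets need not coincide a priori. I would handle this by processing the variables along a topological order of $G_P$ and by exploiting the standing convention that all distributions admit densities with respect to the Lebesgue measure: in the full-support (soft-intervention) regime the relevant supports agree and the transfer is automatic, while in general one must verify that $\supp(P({\bf PA}_i^{G_P}))$ is contained in the union, over environments with $i\notin\mathcal I_e$, of the interventional parent supports. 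This is exactly the coincidence-on-the-union-of-supports condition flagged in the footnote to~(\ref{eq:kl-decomposition-obs-lemma-7}), and verifying it is what promotes the clean algebraic identity into a complete proof of the converse.
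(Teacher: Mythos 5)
Your proof follows essentially the same route as the paper's: the forward direction is immediate from equality of conditionals, and the converse uses $\bigcap_{e\in\mathcal E}\mathcal I_e=\emptyset$ to ensure every local divergence $\Ex_{P^{e}({\bf pa}_i^{G_P})}[\KL(P(X_i\mid{\bf pa}_i^{G_P})\Vert Q(X_i\mid{\bf pa}_i^{G_P}))]$ appears in the sum and must vanish, whence $P=Q$ by the shared Markov factorization. The support-bookkeeping subtlety you flag (agreement is only guaranteed on $\supp(P^{e_i}({\bf PA}_i^{G_P}))$ rather than $\supp(P({\bf PA}_i^{G_P}))$) is a genuine point that the paper's own proof passes over silently, so your extra care there is a refinement rather than a deviation.
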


\noindent Note that if the reference distribution $P({\bf X})$ is included in our set of interventional distributions, i.e.\ there exists $e \in \mathcal{E}$ such that $\mathcal I_{e} = \emptyset$, the KL divergence $\KL(P({\bf X})\lVert Q({\bf X}))$ makes up one of the terms in the IKL divergence. Thus, the equivalence $\mathscr P \sim_{\mathcal E} \mathscr Q \Longleftrightarrow \mathscr P = \mathscr Q$ is trivially satisfied. We now illustrate an application of the case  where we do not have access to the reference distribution $P$ of the CGM $(P, G_P)$.
\begin{example}[Partial Observability] 
In this example, we illustrate how the ideas presented above formalize the common conception of recovering the joint distribution $P(X_1, X_2, X_3)$ without observing all variables at once --- given that we know it factorizes in a non-trivial way. So, let the causal graph $G_P$ be given by the DAG in Figure \ref{fig:true-causal}. By the Markov factorization (\ref{eq:markovfactorization}), we know that ${P({\bf X}) = P(X_1) \allowbreak \cdot P(X_2 \ | \ X_1)   \cdot P(X_3 \ | \ X_1)}$.

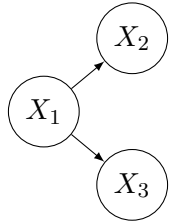
\begin{figure}[h]
   \centering
   \begin{tikzpicture}[
      node distance=0.3cm and .5cm,
      mynode/.style={draw,circle,align=center, minimum size=0.2cm}
    ]
    \node[mynode] (x1) {$X_1$};
    \node[mynode,above right=of x1] (x2) {$X_2$};
    \node[mynode,below right=of x1] (x3) {$X_3$};
    \path (x1) edge[-latex] (x2)
          (x1) edge[-latex] (x3);
    \end{tikzpicture}
     \caption{A sample DAG over variables $X_1, X_2, X_3$. } 
          \label{fig:true-causal}
\end{figure}
\noindent Now assume that we do not have access to this joint distribution, but instead we are only provided with two marginal distributions $P(X_1, X_2), P(X_1, X_3)$ and the knowledge of the underlying DAG. Even with no interventions, observing the two marginal distributions is sufficient to recover the joint distribution $P(X_1, X_2, X_3)$, since it is possible to separately estimate $P(X_2 \ | \ X_1),\allowbreak P(X_3 \ | \ X_1)$ from the marginals and $P(X_1)$ from either of the marginals. This phenomenon is correctly captured in our IKL metric by modelling unobserved variables as interventions, say $P^{e_1}({\bf X}) = P(X_1, X_2) \tilde P(X_3)$, $P^{e_2}({\bf X}) = P(X_1, X_3) \tilde P(X_2)$:
\begin{align}
    & \KL(P(X_1, X_2) \ | \ Q(X_1, X_2)) + \KL(P(X_1, X_3) \ | \ Q(X_1, X_3))  \\
    \begin{split}\label{eq:example-local-terms}
     \overset{(\ref{eq:kl-decomposition-special-lemma-7})}{=} & \ \KL(P(X_1) \Vert Q(X_1)) + \Ex\limits_{P(X_1)}[\KL(P(X_2 \ | \ X_1) \Vert Q(X_2 \ |  \ X_1)) \\
      &  \hspace{.5cm}+  \KL(P(X_1) \Vert Q(X_1)) +\Ex\limits_{P(X_1)}\left[\KL(P(X_3 \ | \ X_1) \Vert Q(X_3 \ |  \ X_1))\right]
    \end{split} \\
     \overset{(\ref{eq:interv-def-known})}{=}&  \  2 \cdot \IKL(\mathscr{P}_{\{e_1, e_2\}}\lVert \mathscr Q) 
\end{align}
We can now minimize each of the terms in equation (\ref{eq:example-local-terms}) with respect to our model distribution $Q$. By Lemma \ref{lem:interv-equiv-light}, this will also minimize the distance between the joint distributions $P, Q$ without observing all variables at once. Note that the term $\KL(P(X_1) \Vert Q(X_1))$ occurs twice in equation (\ref{eq:example-local-terms}). This means that we could still identify the joint distribution in the same way, if an intervention had happened on $X_1$ in either $P(X_1, X_2)$ or $P(X_1, X_3)$.
\end{example}

\subsection{The IKL divergence in the general case}\label{sec:interventional-unknown}
In the previous section, we discussed how the Interventional KL divergence quantifies differences between two causal models $\mathscr P = (P, G_P)$ and $\mathscr Q = (Q, G_P)$ when $G_P$ is known. Since this is generally not the case, we now generalize Definition \ref{def:interv-div-known} to an unknown $G_P$. To this end, we need to adapt Lemma \ref{lem:KL1} determining the decomposition of the KL divergence, since the conditioning sets are also unknown. As it turns out, this decomposition still holds when conditioning on ${\bf PA}_i^{G_Q}$ as long as $P$ is Markovian with respect to $G_Q$. Otherwise, there is a residual term quantifying the difference of $P$ from being Markovian, for which we introduce the notion of a \emph{Markov Projection}.

\begin{definition}[Markov Projection]
    Let $\mathcal A$ be the space of probability distributions over the variable set ${\bf X}$. The \emph{Markov projection} (or \emph{$M$-projection}) of a probability distribution $P$ onto a DAG $G$ is the mapping $\pi_G: \mathcal A \to \mathcal A$
    defined by:
    \[
    \pi_G(P) = \argmin_{Q \text{ Markov w.r.t. }G} \KL(P \lVert Q)
    \]
\end{definition}
The following Lemma \ref{lem:projection} explicitly describes the Markov projection onto a DAG $G$.
\begin{restatable}{lemma}{lemprojection}\label{lem:projection}%
Given a distribution $P$ and a DAG $G$:
    \begin{equation}
    \pi_G(P)({\bf X})=\prod_{i=1}^d P(X_i \ | \ {\bf PA}_i^G).
    \end{equation}
\end{restatable}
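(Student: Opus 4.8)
The plan is to reduce the constrained minimization defining $\pi_G(P)$ to an unconstrained, node-by-node problem, exploiting the fact that a distribution is Markovian with respect to $G$ precisely when it factorizes into node-conditionals. First I would record the structural fact underlying this: by Definition~\ref{def:markov_factorization}, any $Q$ that is Markov w.r.t.\ $G$ can be written $Q({\bf X}) = \prod_{i=1}^d Q(X_i \mid {\bf PA}_i^G)$, and conversely every choice of valid node-conditionals $\{q(x_i \mid {\bf pa}_i^G)\}_{i}$ induces such a $Q$. Crucially, there is no coupling constraint linking the conditionals at distinct nodes (each must only integrate to one for every parent configuration), so the collection of node-conditionals forms a set of \emph{free} parameters over which we optimize.

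Next I would introduce the candidate minimizer $\tilde P({\bf X}) := \prod_{i=1}^d P(X_i \mid {\bf PA}_i^G)$, which is Markov w.r.t.\ $G$ by construction, and establish the Pythagorean-type identity: for every $Q$ Markov w.r.t.\ $G$,
\[
\KL(P \Vert Q) = \KL(P \Vert \tilde P) + \sum_{i=1}^d \Ex_{P({\bf pa}_i^G)}\left[\KL\left(P(X_i \mid {\bf pa}_i^G) \Vert Q(X_i \mid {\bf pa}_i^G)\right)\right].
\]
The derivation is a short manipulation: write $\log\frac{p}{q} = \log\frac{p}{\tilde p} + \log\frac{\tilde p}{q}$; use the product forms of $\tilde p$ and $q$ to split $\log\frac{\tilde p}{q} = \sum_i \log\frac{p(x_i \mid {\bf pa}_i^G)}{q(x_i \mid {\bf pa}_i^G)}$; and, since each summand depends only on $(x_i,{\bf pa}_i^G)$, integrate out all remaining variables against the \emph{true} marginal $p(x_i,{\bf pa}_i^G)$, which produces exactly the expected conditional KL terms weighted by $P({\bf pa}_i^G)$. (As a sanity check, when $P$ is already Markov w.r.t.\ $G$ we have $\tilde P = P$, the first term vanishes, and we recover the decomposition of Lemma~\ref{lem:KL1}, equation~(\ref{eq:kl-decomposition-special-lemma-7}), with $G$ in place of $G_P$.)

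Finally I would conclude by inspecting the two terms. The term $\KL(P \Vert \tilde P)$ is a constant independent of $Q$, while every summand of the second term is a $P$-expectation of a KL divergence, hence nonnegative by Gibbs' inequality and equal to zero iff $Q(X_i \mid {\bf pa}_i^G) = P(X_i \mid {\bf pa}_i^G)$ for $P$-almost every parent configuration, for all $i$. Thus $\KL(P \Vert Q)$ is minimized over Markov-$Q$ exactly at $Q = \tilde P$, yielding $\pi_G(P)({\bf X}) = \prod_{i=1}^d P(X_i \mid {\bf PA}_i^G)$. An equivalent route is a direct term-by-term cross-entropy argument: minimizing $\KL(P\Vert Q)$ is the same as maximizing $\int p({\bf x})\sum_i \log q(x_i \mid {\bf pa}_i^G)\,d{\bf x}$, and for each fixed ${\bf pa}_i^G$ the inner integral is maximized at $q(\cdot \mid {\bf pa}_i^G) = p(\cdot \mid {\bf pa}_i^G)$.

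The main obstacle is the step that legitimizes the term-by-term optimization, namely verifying that the feasible set of Markov distributions is genuinely the unconstrained product of arbitrary valid node-conditionals with no global normalization coupling between nodes; the decomposition above is what makes this rigorous. Secondary technical care is needed so that every $\KL$ is well defined (absolute continuity / support conditions ensuring the integrands are meaningful) and so that the uniqueness of the minimizer is stated only $P$-almost everywhere on the relevant parent marginals, since $Q$ is unconstrained off the support of $P$.
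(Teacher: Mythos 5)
Your proposal is correct and follows essentially the same route as the paper: the Pythagorean-type identity you derive is exactly the paper's technical lemma (Lemma~\ref{lem:technical}), obtained by the same splitting $\log\frac{p}{q}=\log\frac{p}{\tilde p}+\log\frac{\tilde p}{q}$ and the same integration against $P(x_i,{\bf pa}_i^G)$, after which the minimizer is read off from the nonnegativity of the conditional KL terms. The extra care you flag about the feasible set being a free product of node-conditionals and about almost-everywhere uniqueness is a welcome refinement but does not change the argument.
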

\noindent From the Markov factorization property \ref{def:markov_factorization} it follows that $P=\pi_G(P)$ if and only if $P$ is Markovian with respect to $G$. Otherwise, we have $P \neq \pi_G(P)$, since there are variables $X_i, X_j$ such that $X_i {\not\indep} X_j \ | \ {\bf PA}_i^G$ with respect to $P$, but $X_i \indep X_j \ | \ {\bf PA}_i^G$ in $\pi_G(P)$.
If we now replace $G_P$ by $G_Q$ in Lemma \ref{lem:KL1} and allow for general (non-Markovian) distributions $P$, the deviation from being Markovian enters the equation as an additional term: 

\begin{restatable}{lemma}{lemKLtwo}\label{lem:KL2}
    Given two CGMs $ \mathscr P = (P, G_P), \mathscr Q = (Q, G_Q)$, assume that $P^e$ emerges from $P$ via an environment shift according to Assumption \ref{assum:multi-env}. We then have the following decomposition
    \begin{equation}
    \label{eq:kl-decomposition-obs}
    \begin{split} 
    \KL(P^e({\bf X}) \lVert Q({\bf X}))  &= \sum_{i\in \mathcal I_e} \Ex\limits_{P^e({\bf pa}_i^{G_Q})}\left[\KL\left(P^e(X_i \ | \ {\bf pa}_i^{G_Q}) \Vert Q(X_i \ |  \ {\bf pa}_i^{G_Q})\right)\right] \\
    & \hspace{.5cm} +\sum_{i\in[d]\backslash\mathcal I_e} \Ex\limits_{P^e({\bf pa}_i^{G_Q})}\left[\KL\left(P^e(X_i \ | \ {\bf pa}_i^{G_Q}) \Vert Q(X_i \ |  \ {\bf pa}_i^{G_Q})\right)\right] \\
    & \hspace{.5cm} + \KL\left(P^e({\bf X}) \lVert \pi_{G_Q}(P^e)({\bf X})\right)
    \end{split}
\end{equation}
As a special case we have for $\mathcal I_e = \emptyset $
\begin{equation}\label{eq:kl-decomposition-special}
    \begin{split}
    \KL(P({\bf X}) \lVert Q({\bf X}))  &= \sum_{i\in [d]} \Ex\limits_{P({\bf pa}_i^{G_Q})}\left[\KL\left(P(X_i \ | \ {\bf pa}_i^{G_Q}) \Vert Q(X_i \ |  \ {\bf pa}_i^{G_Q})\right)\right] \\
    & \hspace{.5cm} + \KL\left(P({\bf X}) \lVert \pi_{G_Q}(P)({\bf X})\right)
    \end{split}
\end{equation}

\end{restatable}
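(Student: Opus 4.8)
The plan is to prove the special case $\mathcal I_e = \emptyset$, i.e.\ equation~(\ref{eq:kl-decomposition-special}), for an arbitrary distribution $P$ and a $Q$ Markovian with respect to $G_Q$, and then recover the general decomposition~(\ref{eq:kl-decomposition-obs}) by instantiating the result at $P := P^e$ and splitting the resulting sum over $[d]$ into the intervened indices $\mathcal I_e$ and the non-intervened indices $[d]\backslash\mathcal I_e$. This last split is purely organizational (to mirror the structure of Lemma~\ref{lem:KL1}): because the conditioning sets are now the $G_Q$-parents rather than the $G_P$-parents, the simplification $P^e(X_i \ | \ {\bf pa}_i) = P(X_i \ | \ {\bf pa}_i)$ of~(\ref{eq:change-multi}) no longer applies, so both sums retain the $P^e$-conditionals. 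Throughout I assume the standing absolute-continuity condition $P \ll Q$ so that all divergences are well defined.

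The key idea is to insert the Markov projection $\pi_{G_Q}(P)$ as a pivot inside the log-ratio. Writing
\[
\log \frac{p({\bf x})}{q({\bf x})} = \log \frac{p({\bf x})}{\pi_{G_Q}(P)({\bf x})} + \log \frac{\pi_{G_Q}(P)({\bf x})}{q({\bf x})}
\]
and integrating against $p$ splits $\KL(P \lVert Q)$ into the residual term $\KL(P \lVert \pi_{G_Q}(P))$ --- exactly the Markov-deviation term in the statement --- plus a cross term $\int p({\bf x}) \log [\pi_{G_Q}(P)({\bf x}) / q({\bf x})]\, d{\bf x}$. It remains to show that this cross term equals $\sum_i \Ex_{P({\bf pa}_i^{G_Q})}[\KL(P(X_i \ | \ {\bf pa}_i^{G_Q}) \lVert Q(X_i \ | \ {\bf pa}_i^{G_Q}))]$.

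For the cross term I would invoke Lemma~\ref{lem:projection} to factorize $\pi_{G_Q}(P)({\bf x}) = \prod_i P(x_i \ | \ {\bf pa}_i^{G_Q})$ and use that $Q$ is Markovian with respect to $G_Q$ to factorize $q({\bf x}) = \prod_i Q(x_i \ | \ {\bf pa}_i^{G_Q})$. Crucially, both factorizations run over the \emph{same} conditioning sets ${\bf PA}_i^{G_Q}$, so the log-ratio collapses to $\sum_i \log [P(x_i \ | \ {\bf pa}_i^{G_Q}) / Q(x_i \ | \ {\bf pa}_i^{G_Q})]$. After exchanging sum and integral, each summand depends on ${\bf x}$ only through $(x_i, {\bf pa}_i^{G_Q})$; marginalizing $p$ down to $p(x_i, {\bf pa}_i^{G_Q}) = p({\bf pa}_i^{G_Q})\, p(x_i \ | \ {\bf pa}_i^{G_Q})$ and integrating first over $x_i$ produces the inner conditional divergence, and then over ${\bf pa}_i^{G_Q}$ the outer expectation $\Ex_{P({\bf pa}_i^{G_Q})}[\cdot]$. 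This yields the special case, and the instantiation $P := P^e$ together with the index partition completes the general decomposition.

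I expect the main subtlety to be measure-theoretic bookkeeping rather than any conceptual difficulty: justifying the interchange of summation and integration (Tonelli applied to the nonnegative conditional-divergence pieces, with $P \ll Q$ keeping the signed remainder integrable) and checking the absolute-continuity chain $P \ll \pi_{G_Q}(P) \ll Q$, so that neither log-ratio is ill-defined on a set of positive $P$-measure. It is also worth emphasizing precisely where this proof departs from Lemma~\ref{lem:KL1}: the pivot insertion remains valid even when $P$ is \emph{not} Markovian with respect to $G_Q$, and it is exactly this non-Markovianity that keeps the residual $\KL(P \lVert \pi_{G_Q}(P))$ term from vanishing.
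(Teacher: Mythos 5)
Your proposal is correct and matches the paper's own argument: the paper proves this via the technical lemma embedded in the proof of Lemma~\ref{lem:projection}, which performs exactly your pivot insertion of $\hat P = \pi_{G_Q}(P)$ into the log-ratio, factorizes $\hat P$ and $Q$ over the common conditioning sets ${\bf PA}_i^{G_Q}$, and marginalizes to obtain the conditional divergences, with the general case then obtained by splitting the sum over $\mathcal I_e$ and $[d]\backslash\mathcal I_e$. Your added attention to the absolute-continuity chain and the sum--integral interchange is more careful than the paper's write-up but does not change the route.
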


\noindent With this generalized decomposition, we adapt Definition \ref{def:interv-div-known} of the Interventional KL divergence correspondingly to arbitrary CGMs $\mathscr P, \mathscr Q$.

\begin{definition}[Interventional KL divergence] \label{def:interv-div-unknown}
    Let $\mathscr P = (P, G_P), \mathscr Q = (Q, G_Q)$ be two causal models. Assume that we have access to a set of interventional distributions $\mathscr P_{\mathcal E} = \left((P^e, \mathcal I_e)\right)_{e\in \mathcal E}$ generated from $\mathscr P$. We then define the Interventional KL divergence as
    \begin{equation}
    \begin{split}
        \IKL(\mathscr{P}_{\mathcal E}\lVert \mathscr Q) &= \frac{1}{|\mathcal E|}\sum_{e\in \mathcal E}\Bigg[\sum_{i \in [d] \backslash \mathcal I_e}\Ex\limits_{P^e({\bf pa}_i^{G_Q})}\left[\KL\left(P^e(X_i \ | \ {\bf pa}_i^{G_Q}) \Vert Q(X_i \ |  \ {\bf pa}_i^{G_Q})\right)\right]
        \\& \hspace{2cm} + \KL\left(P^e({\bf X}) \lVert \pi_{G_Q}(P^e)({\bf X})\right)\Bigg]
        \label{eq:interv-def-unknown}
    \end{split}
    \end{equation}
\end{definition}

\noindent Note that this definition is consistent with our previous notion of interventional distance in the case where $G_P = G_Q$, see Definition \ref{def:interv-div-known}. If $G_P \neq G_Q$, Lemma \ref{lem:KL2} implies that we can capture differences between the causal models by comparing the Markov factorizations of $P^e$ and $Q$ with respect to $G_Q$ (first line in equation (\ref{eq:interv-def-unknown})), if we also account for potential deviations from $P$ satisfying this factorization (second line in equation (\ref{eq:interv-def-unknown})).

The definition of $\IKL$ went under the premise that the effect of the interventions that we consider is unknown, i.e.\ only the joint distributions $P^e({\bf X})$ and intervention targets $\mathcal I_e$ are observed, but we don't know the mapping $P(X_i \ | \ {\bf PA}_i^{G_P}) \mapsto \tilde P(X_i \ | \ {\bf PA}_i^{G_P})$. If we know the mechanism shifts that happened between the distribution $P({\bf X})$ and the interventional distributions $P^e({\bf X})$ and we are able to accurately reproduce them in our model $\mathscr Q = (Q, G_Q)$, then the $\IKL$ collapses to a simpler form, namely the average of the KL divergences between the joint distributions $P^e({\bf X})$ and $Q({\bf X})$. 

\begin{restatable}{theorem}{thmknowninterv}\label{thm:known-interv}[Known interventions]
    Given causal models $\mathscr P = (P, G_P), \mathscr Q = (Q, G_Q)$ and a set of interventional distributions $\mathscr P_{\mathcal E} = \left((P^e, \mathcal I_e)\right)_{e\in \mathcal E}$, define $Q^e$ for each environment $e\in \mathcal E$ as follows:
    \begin{equation}
        Q^e({\bf X}) = \prod_{i \not \in \mathcal I_e} Q(X_i \ | \ {\bf PA}_i^{G_Q}) \cdot \prod_{j \in \mathcal I_e} P^e(X_j \ | \ {\bf PA}_j^{G_Q})
    \end{equation}
    where $ P^e(X_j \ | \ {\bf PA}_j^{G_Q})$ denotes the mechanisms changed as the result of the environment shift. Then we have that 
    \begin{equation}
        \IKL(\mathscr{P}_{\mathcal E}\lVert \mathscr Q) = \frac{1}{|\mathcal E|} \sum_{e\in \mathcal E} \KL(P^e({\bf X}) \lVert Q^e({\bf X})) 
    \end{equation}
\end{restatable}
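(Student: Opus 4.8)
The plan is to prove the identity environment by environment and then average. Fixing an environment $e \in \mathcal E$, it suffices to show that the $e$-th summand of $\IKL$ in (\ref{eq:interv-def-unknown}),
\begin{equation*}
\sum_{i \in [d] \backslash \mathcal I_e} \Ex\limits_{P^e({\bf pa}_i^{G_Q})}\left[\KL\left(P^e(X_i \mid {\bf pa}_i^{G_Q}) \Vert Q(X_i \mid {\bf pa}_i^{G_Q})\right)\right] + \KL\left(P^e({\bf X}) \Vert \pi_{G_Q}(P^e)({\bf X})\right),
\end{equation*}
equals $\KL(P^e({\bf X}) \Vert Q^e({\bf X}))$; summing over $e$ and dividing by $|\mathcal E|$ then yields the theorem. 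The main device is the special-case decomposition (\ref{eq:kl-decomposition-special}) of Lemma~\ref{lem:KL2}, which I would apply not to the pair $(P^e, Q)$ but to the pair $(P^e, Q^e)$.

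First I would record that $Q^e$ is Markovian with respect to $G_Q$: by construction it is a product of valid conditional densities indexed by the $G_Q$-parent sets, whose $i$-th factor equals $Q(X_i \mid {\bf PA}_i^{G_Q})$ when $i \notin \mathcal I_e$ and $P^e(X_i \mid {\bf PA}_i^{G_Q})$ when $i \in \mathcal I_e$. By the uniqueness of the Markov factorization (Definition~\ref{def:markov_factorization}), these factors are exactly the conditionals of $Q^e$ along $G_Q$. Hence $(Q^e, G_Q)$ is a legitimate CGM and (\ref{eq:kl-decomposition-special}) applies to $\KL(P^e({\bf X}) \Vert Q^e({\bf X}))$, producing a sum of local terms over all $i \in [d]$ plus the residual $\KL(P^e({\bf X}) \Vert \pi_{G_Q}(P^e)({\bf X}))$. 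Crucially, by Lemma~\ref{lem:projection} this residual depends only on $P^e$ and $G_Q$, so it is \emph{identical} to the residual already appearing in the $\IKL$ summand, and the two match with no further work.

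It then remains to match the local terms. Splitting $\sum_{i \in [d]}$ according to whether $i \in \mathcal I_e$, the factors identified above show that for $i \in \mathcal I_e$ the integrand is $\KL\!\left(P^e(X_i \mid {\bf pa}_i^{G_Q}) \Vert P^e(X_i \mid {\bf pa}_i^{G_Q})\right) = 0$, so every intervened term vanishes; for $i \notin \mathcal I_e$ the factor $Q^e(X_i \mid {\bf pa}_i^{G_Q})$ equals $Q(X_i \mid {\bf pa}_i^{G_Q})$, reproducing exactly the first group of terms in the $\IKL$ summand. Combining the matched local terms with the matched residual establishes the per-environment identity, and averaging over $e$ completes the argument.

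I expect the only genuine subtlety to be the bookkeeping of the second paragraph: verifying that the conditionals of $Q^e$ with respect to $G_Q$ are precisely the prescribed factors, so that the intervened coordinates contribute zero KL. This rests on the standard but essential fact that a distribution presented as a product of conditionals along a DAG has exactly those conditionals as its Markov kernels (together with the implicit absolute-continuity $P^e \ll Q^e$ needed for the KL terms to be finite). Once this identification is in hand, the rest is a direct term-by-term comparison requiring no inequality or limiting argument.
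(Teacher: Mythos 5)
Your proposal is correct and follows essentially the same route as the paper's proof: apply the decomposition of Lemma~\ref{lem:KL2} (equivalently, Lemma~\ref{lem:technical}) to the pair $(P^e, Q^e)$, use that $Q^e$ is Markovian with respect to $G_Q$ with the prescribed conditionals so the terms for $i \in \mathcal I_e$ vanish, observe that the residual $\KL(P^e \Vert \pi_{G_Q}(P^e))$ depends only on $P^e$ and $G_Q$, and average over environments. Your explicit remark that the residual term is unchanged because the Markov projection does not involve $Q^e$ is a useful clarification that the paper leaves implicit.
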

\begin{proof}[Sketch]
    If we apply the same mechanism shifts to $Q^e$ that appear between $P$ and $P^e$, we can use the decomposition of the KL divergence in Lemma \ref{lem:KL2} to see that these terms cancel out. The sum over the remaining terms then equals our definition of the Interventional KL divergence.
\end{proof}
\noindent In particular, this Theorem applies to hard interventions on single variables that were considered by \cite{ladder}.

\subsection{Interventional differences in the IKL divergence}
In the previous sections, we have defined the Interventional KL divergence for two general causal graphical models $\mathscr P, \mathscr Q$ and related it to the (standard) KL divergence. However, it still remains to show that the IKL divergence defines a valid distance between $\mathscr P, \mathscr Q$, i.e. that  $\mathscr P \sim_{\mathcal E} \mathscr Q \Longleftrightarrow \IKL(\mathscr P_{\mathcal E} \lVert \mathscr Q) = 0 \Longleftrightarrow \mathscr P = \mathscr Q$ under suitable conditions on $\mathcal E$. This equivalence will be subject to this section. In particular, we will focus on differences beyond Markov equivalence, since statistically identifiable differences between $\mathscr P, \mathscr Q$ are already encoded in the purely observational distance $\KL(P({\bf X})\lVert Q({\bf X}))$. For simplicity, we assume in the following that the empty intervention $\mathcal I = \emptyset$ is included in the environment.
Further, we make use of our assumptions on the multi-environment distributions (Assumptions~\ref{assum:multi-env}, \ref{asum:pseudo-causal}), which imply for Markov equivalent $G_P, G_Q$ that the following statements are equivalent:

\begin{enumerate}
     \item[(i)] For all $e$ with $i\not\in \mathcal I_e$: \ $\Ex\limits_{P^e({\bf PA}_i^{G_Q})}\left[\KL\left(P^e(X_i \ | \ {\bf PA}_i^{G_Q}) \Vert P(X_i \ |  \ {\bf PA}_i^{G_Q})\right)\right] = 0 $
    \item[(i)] ${\bf PA}_i^{G_Q} = {\bf PA}_i^{G_P}$.
\end{enumerate}
As shown in Corollary \ref{cor:identification-distance} and Example \ref{example:interventions-not-sufficient}, this equivalence does not hold given only a limited set of interventional distributions. A comprehensive characterization of differences between $P^e(X_i \ | \ {\bf PA}_i^{G_Q})$ and $P(X_i \ | \ {\bf PA}_i^{G_Q})$ can be given based on $d$-separation in the augmented DAG $G_{P, {\bf X} \cup \{e\}}$, assuming faithfulness \citep{causdiscSMS}. We adopt the faithfulness assumption in the following but, since $G_P$ is assumed to be generally unknown, we here propose a simpler sufficient condition that trades off knowledge about $G_P$ with access to interventional distributions. It is based on the following definition of (directed) unblocked paths between a variable $X_i$ and intervened variables.

\begin{definition}
    Let $\mathscr P = (P, G_P)$ be a CGM and $e\in \mathcal E$ an environment. For some  (potentially different) DAG $G$ over the same variables ${\bf X}$, we say that there exists a (directed) unblocked path $e \overset{G}{\to} X_i$ in $G$ given some variable set ${\bf Z} \subseteq {\bf X}$, if there exists $k\in \mathcal I_e$ and a directed path $X_k \to \ldots \to X_i$ in $G$ such that no element of ${\bf Z}$ intersects with the path. If $k=i$, this condition is trivially satisfied.
\end{definition}
\noindent 
So, in particular, $e \overset{G_P}{\to} X_i$ implies that the variable $X_i$ is d-connected to an intervened variable given conditioning set ${\bf Z}$. Assuming faithfulness, we can conclude that the environment has an influence on $X_i$ given ${\bf Z}$ and thus $P^e(X_i | {\bf Z}) \neq P(X_i | {\bf Z})$.
If $G_Q$ shares the same skeleton with $G_P$, this definition yields a sufficient condition for the mechanism $P^e(X_i \ | \ {\bf PA}_i^{G_Q})$ to be different from $P(X_i \ | \ {\bf PA}_i^{G_Q})$.
\begin{restatable}{lemma}{lemcondchange} \label{lem:cond-change}
    Let $\mathscr P = (P, G_P), \mathscr Q = (Q, G_Q)$ be CGMs, such that $G_P$ and $G_Q$ share the same skeleton and $X_i \overset{G_Q}{\to}X_j$, $X_i \overset{G_P}{\leftarrow} X_j$ be a flipped edge between $G_P$ and $G_Q$. Further, let $P^e$ be an interventional distribution generated from $\mathscr P$.
    \begin{enumerate}
        \item[(i)] If there exists an unblocked path $e \overset{G_P}{\to} X_i$ given ${\bf PA}_j^{G_Q} \backslash X_i$, then $P^e(X_j \ | \ {\bf PA}_j^{G_Q}) \neq P(X_j \ | \ {\bf PA}_j^{G_Q})$.
        \item[(ii)] If there exists an unblocked path $e \overset{G_P}{\to} X_j$ given ${\bf PA}_i^{G_Q}$, then $P^e(X_i \ | \ {\bf PA}_i^{G_Q}) \allowbreak \neq \allowbreak  P(X_i \ | \ {\bf PA}_i^{G_Q})$.
    \end{enumerate}
\end{restatable}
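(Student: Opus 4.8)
The plan is to work entirely inside the augmented DAG $G_{P,{\bf X}\cup\{e\}}$, obtained from $G_P$ by adjoining an environment node $e$ with edges $e\to X_k$ for every $k\in\mathcal I_e$. The only tool I need is the one already isolated in the text: under faithfulness of the augmented graph, whenever $e$ is d-connected to a variable $X_\ell$ given a set ${\bf Z}$, we have the conditional dependence $X_\ell {\not\indep} e \mid {\bf Z}$, and hence the corresponding conditional genuinely changes with the environment, i.e.\ $P^e(X_\ell \mid {\bf Z})\neq P(X_\ell \mid {\bf Z})$. Thus for each claim it suffices to exhibit, inside $G_{P,{\bf X}\cup\{e\}}$, a path from $e$ to the relevant target variable that is active given the prescribed conditioning set. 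The sole structural input is the flipped edge, which appears as $X_j\to X_i$ in $G_P$ and as $X_i\to X_j$ in $G_Q$.

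For part (ii) I would build a purely directed path. By hypothesis there is a directed path $X_k\to\cdots\to X_j$ with $k\in\mathcal I_e$ that is unblocked given ${\bf PA}_i^{G_Q}$; prepending $e\to X_k$ and appending the $G_P$-edge $X_j\to X_i$ yields the directed path $e\to X_k\to\cdots\to X_j\to X_i$. Every internal node on it is a chain node, so the path is active given ${\bf PA}_i^{G_Q}$ provided none of these nodes lies in ${\bf PA}_i^{G_Q}$. The prefix up to $X_j$ is unblocked by assumption, and the crucial new node $X_j$ is \emph{not} in ${\bf PA}_i^{G_Q}$ precisely because the edge is flipped (in $G_Q$, $X_j$ is a child of $X_i$). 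Hence $e$ and $X_i$ are d-connected given ${\bf PA}_i^{G_Q}$, and faithfulness gives $P^e(X_i \mid {\bf PA}_i^{G_Q})\neq P(X_i \mid {\bf PA}_i^{G_Q})$.

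For part (i) the path instead exploits a collider. Take the directed path $X_k\to\cdots\to X_i$ with $k\in\mathcal I_e$ that is unblocked given ${\bf PA}_j^{G_Q}\backslash X_i$, and attach the reversed $G_P$-edge $X_i\leftarrow X_j$, so that $X_i$ becomes a collider on the concatenated path $e\to\cdots\to X_i\leftarrow X_j$. Since $X_i\in{\bf PA}_j^{G_Q}$ (the edge $X_i\to X_j$ is present in $G_Q$), conditioning on the full set ${\bf PA}_j^{G_Q}$ activates this collider rather than blocking the path; the prefix remains active because its nodes already avoided ${\bf PA}_j^{G_Q}\backslash X_i$ and $X_i$ is its endpoint, not an interior node. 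Therefore $e$ and $X_j$ are d-connected given ${\bf PA}_j^{G_Q}$, and faithfulness yields $P^e(X_j \mid {\bf PA}_j^{G_Q})\neq P(X_j \mid {\bf PA}_j^{G_Q})$.

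I expect the main obstacle to be book-keeping about path validity rather than any deep step. Concretely, I must ensure each concatenation is a simple path: in part (ii) that $X_i$ does not already occur on the directed path to $X_j$, and in part (i) that $X_j$ does not occur on the directed path to $X_i$ (if it does, the collider argument is superseded by a shorter directed sub-path from $e$, which only strengthens the conclusion). The trivial case $k=i$ (resp.\ $k=j$), where the target is itself intervened, must be dispatched separately but is immediate from the definition. Finally, I should state precisely the form of faithfulness invoked so that the passage from graphical d-connection in $G_{P,{\bf X}\cup\{e\}}$ to the inequality of the environment-specific and reference conditionals is fully justified.
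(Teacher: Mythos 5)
Your proof is correct and follows essentially the same route as the paper's: both cases reduce to exhibiting $d$-connection between $e$ and the target variable in the augmented graph $G_{P,{\bf X}\cup\{e\}}$ --- via the activated collider at the conditioned child $X_i$ in case (i) and via the directed extension through the unconditioned parent $X_j$ in case (ii) --- followed by an appeal to faithfulness. The paper delegates this to Corollary 4.4 of \cite{causdiscSMS}, whereas you spell out the path constructions and the (easily dispatched) simplicity and book-keeping issues explicitly.
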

\noindent 
This Lemma yields a sufficient condition for structural differences beyond Markov-equivalence to show up in the IKL-divergence. Even if $P=Q$ coincide observationally, the interventional terms $\Ex\limits_{P^e({\bf PA}_k^{G_Q})}\left[\KL\left(P^e(X_k \ | \ {\bf PA}_k^{G_Q}) \Vert Q(X_k \ |  \ {\bf PA}_k^{G_Q})\right)\right] >0 $ for $k\in \{i,j\}$ under the conditions of this Lemma. Since these are conditions on the graphical structure of $G_P$, they cannot be verified if $G_P$ is unknown. But as it turns out, it is sufficient if such paths exist in $G_Q$.
We formalize these observations in the following Theorem:
\begin{restatable}{theorem}{thmsuffinterv}\label{thm:suff-interv}
    Let $\mathscr P = (P, G_P), \mathscr Q = (Q, G_Q)$ be CGMs and $\mathcal E$ a set of environments.
    Now assume that for all (unoriented) edges $X_i \overset{G_Q}{\to} X_j$ there exists an environment $e\in \mathcal E$ such that one of the following conditions holds:
    \begin{enumerate}
        \item[(i)] There exists an unblocked path $e \overset{G_Q}{\to} X_i$ given ${\bf PA}_j^{G_Q} \backslash X_i$ and $j \not \in \mathcal I_e$
        \item[(ii)] There exists an unblocked path $e \overset{G_Q}{\to} X_j$ given ${\bf PA}_i^{G_Q}$ and $i \not \in \mathcal I_e$
    \end{enumerate}
    Then, $\mathscr P, \mathscr Q$ are interventionally equivalent, $\mathscr P \sim_{\mathcal E} \mathscr Q$,
    if and only if $\mathscr P = \mathscr Q$.
\end{restatable}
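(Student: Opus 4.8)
The plan is to prove the two implications separately, the easy one by direct substitution and the substantive one by first pinning down the distribution and then the graph. For the direction $\mathscr P = \mathscr Q \Rightarrow \mathscr P \sim_{\mathcal E} \mathscr Q$, suppose $P = Q$ and $G_P = G_Q$. Then for every $e$ and every $i \notin \mathcal I_e$ the unintervened mechanism satisfies $P^e(X_i \mid {\bf pa}_i^{G_Q}) = P(X_i \mid {\bf pa}_i^{G_P}) = Q(X_i \mid {\bf pa}_i^{G_Q})$ by Assumption~\ref{assum:multi-env}, so every conditional KL term in (\ref{eq:interv-def-unknown}) vanishes; moreover each $P^e$ is Markovian with respect to $G_P = G_Q$, so every residual Markov-projection term $\KL(P^e({\bf X}) \lVert \pi_{G_Q}(P^e)({\bf X}))$ is zero as well. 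Hence $\IKL(\mathscr P_{\mathcal E} \lVert \mathscr Q) = 0$.

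For the converse I would exploit that $\IKL$ is an average of non-negative quantities, so $\IKL(\mathscr P_{\mathcal E}\lVert \mathscr Q) = 0$ forces every summand of (\ref{eq:interv-def-unknown}) to vanish. First I would read off distributional equality. All Markov-projection terms being zero means each $P^e$, and in particular the reference distribution $P$ (using the empty environment $\mathcal I_e = \emptyset$), is Markovian with respect to $G_Q$. Specializing the vanishing conditional terms to the empty environment gives $P(X_i \mid {\bf pa}_i^{G_Q}) = Q(X_i \mid {\bf pa}_i^{G_Q})$ for all $i$ on the relevant supports. Since both $P$ and $Q$ factorize over $G_Q$ by Markovianity, multiplying these coinciding conditionals yields $P = Q$.

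It then remains to show $G_P = G_Q$. Because $P$ is faithful to $G_P$ and, under the blanket Markov-and-faithfulness assumption (Assumption after Definition~\ref{assum:faithful}) applied to the model $\mathscr Q$, $Q = P$ is faithful to $G_Q$, the two graphs entail the same conditional-independence statements and are therefore Markov equivalent: they share a skeleton and the same v-structures and can differ only in the orientation of edges not fixed by the equivalence class. I would rule out any such flipped edge by contradiction. Suppose $X_i \overset{G_Q}{\to} X_j$ but $X_i \overset{G_P}{\leftarrow} X_j$, and let $e$ be the environment supplied by the hypothesis, satisfying (i) or (ii). The goal is to invoke Lemma~\ref{lem:cond-change}, whose conclusion is precisely that one of $P^e(X_k \mid {\bf PA}_k^{G_Q})$ with $k \in \{i,j\}$ differs from $P(X_k \mid {\bf PA}_k^{G_Q})$; since $P = Q$ and the relevant index is not intervened upon, that inequality makes the associated term of (\ref{eq:interv-def-unknown}) strictly positive, contradicting $\IKL = 0$ and forcing $G_P = G_Q$.

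The hard part will be bridging the hypotheses, which are phrased as directed unblocked paths in the \emph{known} graph $G_Q$, with Lemma~\ref{lem:cond-change}, which demands the analogous paths in the \emph{unknown} graph $G_P$. I would close this gap through faithfulness and the $d$-separation characterization of mechanism changes in the augmented DAG $G_{P,{\bf X}\cup\{e\}}$: a directed unblocked path $e \overset{G_Q}{\to} X_i$ given the stated conditioning set, together with the shared skeleton and the preservation of v-structures under Markov equivalence, must correspond to an active path from the environment node to $X_i$ in $G_P$ given the same set, and hence to $P^e(X_i \mid \cdot) \neq P(X_i \mid \cdot)$. Two delicate points arise here. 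First, in condition (i) the conditioning set is ${\bf PA}_j^{G_Q} \backslash X_i$ while the affected conditional is $P^e(X_j \mid {\bf PA}_j^{G_Q})$, so I must track how additionally conditioning on $X_i$ --- a \emph{child} of $X_j$ in $G_P$ after the flip --- opens the relevant collider connection rather than blocking it. Second, a path that is directed in $G_Q$ need not stay directed in $G_P$, so I must verify that it remains $d$-connecting and does not acquire a blocking collider at an unconditioned internal node; this is exactly where Markov equivalence (no creation of new unshielded colliders) is essential.
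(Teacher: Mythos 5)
Your forward direction and your derivation of $P=Q$ from $\IKL=0$ (via the empty environment, the vanishing Markov-projection terms, and the factorization over $G_Q$) match the paper's argument, which phrases the same step contrapositively as $\IKL(\mathscr P_{\mathcal E}\lVert\mathscr Q)\geq \frac{1}{|\mathcal E|}\KL(P({\bf X})\lVert Q({\bf X}))>0$ whenever $P\neq Q$. The gap is in the step you yourself flag as ``the hard part.'' Your plan is to show that a directed unblocked path $e \overset{G_Q}{\to} X_i$ given ${\bf PA}_j^{G_Q}\backslash X_i$ ``must correspond to an active path from the environment node to $X_i$ in $G_P$ given the same set,'' leaning on the fact that Markov equivalence creates no new unshielded colliders. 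This does not go through: Markov equivalence only constrains \emph{unshielded} triples, so an internal chain node $X_{k-1}\to X_k\to X_{k+1}$ of the $G_Q$-path may become a \emph{shielded} collider $X_{k-1}\to X_k\leftarrow X_{k+1}$ in $G_P$ without violating equivalence, and then the path is blocked in $G_P$ (and, more basically, is no longer directed, which is what Lemma~\ref{lem:cond-change} requires). So the same edge $(X_i,X_j)$ need not be detectable via the same path, and your contradiction argument stalls exactly where you anticipated trouble.

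The paper's proof resolves this with an idea your proposal is missing: it does not insist on preserving the path. If the $G_Q$-path is blocked in $G_P$, take a shortest unblocked directed path $(e, X_0,\ldots,X_i)$ in $G_Q$ (so no intermediate node is intervened and each node has exactly one on-path parent in $G_Q$), and observe that its blockage in $G_P$ forces some edge on the path to be flipped between the two graphs. Letting $X_k\overset{G_Q}{\to}X_l$ be the \emph{first} such flipped edge, the prefix $(e,X_0,\ldots,X_k)$ is a directed unblocked path in $G_P$ given ${\bf PA}_l^{G_Q}\backslash X_k$ with $X_l\notin\mathcal I_e$, so Lemma~\ref{lem:cond-change} applies to this \emph{other} mis-oriented edge and still produces a strictly positive term in the IKL divergence. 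In other words, the hypothesis on $G_Q$ guarantees that \emph{some} orientation error is witnessed, not necessarily the one you started from; this redirection is the essential content of the converse, and without it the proof is incomplete.
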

\begin{proof}[sketch]
    If $\mathscr P = \mathscr Q$, we can deduce $\IKL(\mathscr P \lVert \mathscr Q) = 0$ from our Assumptions on the environment \ref{assum:multi-env} and pseudo-causal faithfulness \ref{asum:pseudo-causal}. Conversely, if $P \neq Q$, $\IKL(\mathscr P_{\mathcal E} \lVert \mathscr{Q}) \geq 1/|\mathcal E| \allowbreak \KL(P({\bf X})\lVert Q({\bf X})) > 0$. So it suffices to show that $P=Q$, but $G_P \neq G_Q$ implies that $\IKL(\mathscr{P}_{\mathcal E}\lVert \mathscr Q) > 0$. This case then follows from Lemma \ref{lem:cond-change}, since an unblocked path in $G_Q$ gives rise to an unblocked path to a mis-oriented edge in $G_P$ and thus one of the conditions of Lemma \ref{lem:cond-change} is satisfied, leading to a nonzero term in the IKL divergence.
\end{proof}
\noindent The less we know about the structure of $G_P$ a priori, the more interventional distributions we need to identify all structural differences between the causal models. Since interventional data can be expensive to collect, we will now show how to identify partial structural differences between $\mathscr{P}$ and $\mathscr{Q}$, given insufficient multi-environmental information to satisfy the conditions of Theorem \ref{thm:suff-interv}. In particular, we also allow for environments where only marginal information about a subset of all variables is observed.

\begin{restatable}{corollary}{coridentificationdistance}\label{cor:identification-distance}
    Let $\mathscr P = (P, G_P), \mathscr Q = (Q, G_Q)$ be CGMs with $P=Q$ and $\mathcal E$ a set of environments. Assume that we only have partial multi-environmental information in the following ways:
    \begin{enumerate}
        \item[1)] We only observe a subset of all variables in the causal graph ${\bf X}_S \subseteq \bf X$, i.e.\ we can only distinguish the graphical sub-models  $\mathscr P\big|_S, \mathscr Q\big|_S$, induced by removing the unobserved variables from distributions and causal graphs.
        \item[2)] We know that the conditions of Theorem \ref{thm:suff-interv} are only satisfied for a subset $    E_{\mathcal E}$ of all edges in $G_Q\big|_S$.
    \end{enumerate}
    We define the restricted IKL divergence via:
    \begin{equation}
        \IKL^\text{res}(\mathscr{P}_{\mathcal E}\lVert \mathscr Q) = \frac{1}{|\mathcal E|}\sum_{e\in \mathcal E}\sum_{i \in {\bf X}_{E_{\mathcal E}} \cap \mathcal I_e^C}\Ex\limits_{P^e({\bf pa}_i^{G_Q|_S})}\left[\KL\left(P^e(X_i \ | \ {\bf pa}_i^{G_Q|_S}) \Vert Q(X_i \ |  \ {\bf pa}_i^{G_Q|_S})\right)\right]
    \end{equation} 
    summing only over un-intervened variables $X$ such that $(X, \cdot)$ or $(\cdot, X)$ is contained in $E_\mathcal E$. 
    Then, $\IKL^\text{res}(\mathscr{P}_{\mathcal E}\lVert \mathscr Q) = 0$ implies that the graphs $G_P, G_Q$ coinicde on the edge set $E_{\mathcal E}$. Conversely, if  $\IKL^\text{res}(\mathscr{P}_{\mathcal E}\lVert \mathscr Q) > 0$, there exists a variable $X_i \in {\bf X}_{E_\mathcal E}$ such that ${\bf PA}_i^{G_P} \neq {\bf PA}_i^{G_Q|_S}$.
\end{restatable}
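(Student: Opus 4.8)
The plan is to use the hypothesis $P=Q$ to collapse the statement into a structural comparison of the restricted graphs $G_P|_S$ and $G_Q|_S$, and then to prove the two implications separately. First I would record the simplification: since $P=Q$, for every un-intervened index $i\notin\mathcal I_e$ each summand of $\IKL^\text{res}$ becomes $\Ex_{P^e({\bf pa}_i^{G_Q|_S})}[\KL(P^e(X_i\mid {\bf pa}_i^{G_Q|_S})\Vert P(X_i\mid {\bf pa}_i^{G_Q|_S}))]$, which is nonnegative and vanishes precisely when $P^e(X_i\mid {\bf pa}_i^{G_Q|_S})=P(X_i\mid {\bf pa}_i^{G_Q|_S})$ on the support of $P^e({\bf pa}_i^{G_Q|_S})$. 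I would also note that, $P=Q$ being faithful to both graphs, $G_P$ and $G_Q$ are Markov equivalent and hence share a skeleton, so any disagreement on $E_{\mathcal E}$ is necessarily a flipped edge.

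For the first implication ($\IKL^\text{res}=0$ forces agreement on $E_{\mathcal E}$) I would argue by contraposition. Suppose some edge of $E_{\mathcal E}$ is flipped, say $X_i\overset{G_Q}{\to}X_j$ while $X_i\overset{G_P}{\leftarrow}X_j$. By hypothesis 2) the conditions of Theorem \ref{thm:suff-interv} hold for this edge, so there is an environment $e$ realizing its (i) or (ii) via an unblocked path in $G_Q|_S$. Following the reasoning of the proof sketch of Theorem \ref{thm:suff-interv}, such a path induces an unblocked path reaching the mis-oriented edge in $G_P|_S$, so one of the two hypotheses of Lemma \ref{lem:cond-change} is met. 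Lemma \ref{lem:cond-change} then yields $P^e(X_k\mid {\bf pa}_k^{G_Q|_S})\neq P(X_k\mid {\bf pa}_k^{G_Q|_S})$ for some $k\in\{i,j\}$ with $k\notin\mathcal I_e$; as $X_k\in{\bf X}_{E_{\mathcal E}}$, this contributes a strictly positive term to $\IKL^\text{res}$, contradicting $\IKL^\text{res}=0$.

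For the converse ($\IKL^\text{res}>0$ forces a parent-set disagreement) I would again contrapose and assume ${\bf PA}_i^{G_P}={\bf PA}_i^{G_Q|_S}$ for every $X_i\in{\bf X}_{E_{\mathcal E}}$. For any un-intervened $i\notin\mathcal I_e$, Assumption \ref{assum:multi-env} (equation \eqref{eq:change-multi}) gives $P^e(X_i\mid {\bf pa}_i^{G_P})=P(X_i\mid {\bf pa}_i^{G_P})$; substituting the assumed equality of parent sets together with $P=Q$ makes the integrand $\KL(P(X_i\mid {\bf pa}_i^{G_Q|_S})\Vert P(X_i\mid {\bf pa}_i^{G_Q|_S}))=0$. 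Summing over $e$ and over the incident variables gives $\IKL^\text{res}=0$, which is exactly the contrapositive of the claim.

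The hard part will be controlling the marginalization onto the observed set $S$. Lemma \ref{lem:cond-change} and Theorem \ref{thm:suff-interv} are expressed through $d$-separation and unblocked directed paths, yet marginalizing out the hidden variables can create inducing paths and latent-confounded dependencies; I would therefore need to verify that (a) $G_P|_S$ and $G_Q|_S$ are still well-defined DAGs over $S$ (or interpret them through the appropriate latent projection under pseudo causal sufficiency, Assumption \ref{asum:pseudo-causal}), (b) an unblocked path in $G_Q|_S$ indeed lifts to one reaching the flipped edge in $G_P|_S$, and (c) the faithfulness step turning ``the environment influences $X_i$ given ${\bf Z}$'' into ``$P^e(X_i\mid {\bf Z})\neq P(X_i\mid {\bf Z})$'' survives restriction to $S$. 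Ensuring that ${\bf PA}_i^{G_P}$ in the converse is read consistently inside $S$ belongs to the same bookkeeping. Given Lemma \ref{lem:cond-change} and the mechanism-invariance identity, the two divergence computations themselves are routine; the genuine work is this marginalization bookkeeping.
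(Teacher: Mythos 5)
Your proposal follows essentially the same route as the paper's own proof: both directions are argued by contraposition, with a mis-oriented edge in $E_{\mathcal E}$ producing a positive term via the path-lifting argument of Theorem \ref{thm:suff-interv} and Lemma \ref{lem:cond-change}, and agreement of parent sets forcing every summand to vanish via the mechanism-invariance of Assumption \ref{assum:multi-env}. The marginalization issues you flag (well-definedness of $G_P|_S, G_Q|_S$, lifting of unblocked paths, faithfulness under restriction to $S$) are genuine, but the paper's proof is equally silent on them, so you have not omitted anything the authors actually supply.
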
 
\begin{example}\label{example:interventions-not-sufficient}
Let $(P, G_P), (Q, G_Q)$ be two CGMs with $P=Q$ and graphs given in Figure \ref{fig:ex2-DAGS}. We want to compute the interventional KL divergence to evaluate the fit of $(Q, G_Q)$ to $(P, G_P)$ using interventional distributions $P^{e_1}, P^{e_2}$. 
\begin{figure}[h]
\begin{minipage}{0.5\textwidth}
   \centering
   \begin{tikzpicture}[
      node distance=0.3cm and .5cm,
      mynode/.style={draw,circle,align=center, minimum size=0.2cm}
    ]
        \node[mynode] (x1) {$X_1$};
    \node[mynode,right=of x1] (x2) {$X_2$};
    \node[mynode,right=of x2] (x3) {$X_3$};
    \node[mynode,above right=of x3] (x4) {$X_4$};
    \node[mynode,below right=of x3] (x5) {$X_5$};
    \path (x1) edge[-latex] (x2)
          (x2) edge[-latex] (x3)
          (x3) edge[-latex] (x4)
          (x2) edge[bend left][-latex] (x4)
          (x3) edge[-latex] (x5);
    \end{tikzpicture}
\end{minipage}\hfill
\begin{minipage}{0.5\textwidth}
   \centering
   \begin{tikzpicture}[
      node distance=0.3cm and .5cm,
      mynode/.style={draw,circle,align=center, minimum size=0.2cm}
    ]
    \node[mynode] (x1) {$X_1$};
    \node[mynode,right=of x1] (x2) {$X_2$};
    \node[mynode,right=of x2] (x3) {$X_3$};
    \node[mynode,above right=of x3] (x4) {$X_4$};
    \node[mynode,below right=of x3] (x5) {$X_5$};
    \path (x1) edge[-latex] (x2)
          (x2) edge[-latex] (x3)
          (x3) edge[latex-] (x4)
          (x2) edge[bend left][-latex] (x4)
          (x3) edge[-latex] (x5);
    \end{tikzpicture}
\end{minipage}
 \caption{Two Markov equivalent DAGs $G_P$ (left) and $G_Q$ (right) over variables $X_1, \ldots, X_5$.} 
 \label{fig:ex2-DAGS}
\end{figure}
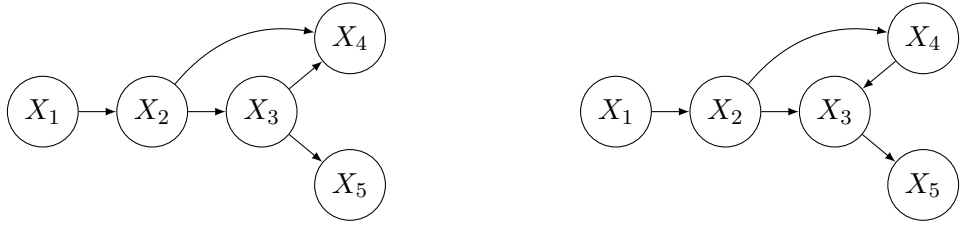
Assume that in environment $e_1$ only variables $X_2, X_3, X_5$ are observed with a mechanism shift $P(X_5|X_3) \mapsto \tilde P(X_5|X_3)$, i.e.\ $E_{e_1} =\{(X_3, X_5)\}$ and ${\bf X}_{e_1} = \{X_3, X_5\}$ in Corollary \ref{cor:identification-distance}. We can then compute the restricted IKL divergence using only the observed variables:
\begin{equation}
    \IKL^\text{res}(\mathscr{P}_{\{e_1\}}\lVert \mathscr Q) = \Ex_{P^{e_1}(X_2)}\KL(P^{e_1}(X_3 | X_2) \lVert Q(X_3 | X_2)) = 0
\end{equation}
\end{example}
indicating that the edge $(X_3, X_5)$ is oriented correctly in $G_Q$. Note that this equality holds even if the unobserved variable $X_4$ also happened to be intervened, since it is not conditioned upon in the restricted IKL divergence.
Next, assume that in another environment $e_2$, variables $ X_2, X_3, X_4$ are observed with a mechanism shift affecting $P(X_3|X_2) \mapsto \tilde P(X_3|X_2)$. Thus, $E_{e_2} = \{(X_4, X_3), \allowbreak (X_2, X_3)\}, {\bf X}_{e_2} = \{X_2, X_3, X_4\}$ and the restricted IKL divergence equals 
\begin{equation}
    \IKL^\text{res}(\mathscr{P}_{\{e_2\}}\lVert \mathscr Q) = \Ex_{P^{e_2}(X_2)}\KL(P^{e_1}(X_4 | X_2) \lVert Q(X_4 | X_2)) +  \KL(P^{e_2}(X_2) \lVert Q(X_2)).
\end{equation}
The first term is strictly greater than zero due to the mis-oriented edge between $X_3$ and $X_4$, i.e.\ case (ii) in Lemma \ref{lem:cond-change} is satisfied. Despite the correct orientation of the edge $X_2 \to X_3$, the second term may also be greater than zero. This happens precisely if the unobserved variable $X_1$ also happened to be intervened (case (ii) of Lemma \ref{lem:cond-change} would again be satisfied). Otherwise, this term would vanish, assuring us that the edge is oriented correctly.
Thus, in the presence of partial observability, nonzero terms in the interventional KL divergence are not necessarily caused by incorrect (observed) edges. Conversely, however, a vanishing restricted IKL divergence guarantees all identifiable edges $E_{\mathcal E}$ to be oriented correctly.

\section{Discussion} \label{sec:discussion}
Multi-environment distributions provide a natural setting for machine learning algorithms to both learn causal structures and leverage them to achieve robustness and out-of-distribution generalization. We have assayed how knowledge about the true causal graph can be used to facilitate learning under distribution shifts and generalize from marginal observations to properties of the joint distribution. Both of these are significant problems for learning in real-world settings, where we cannot always afford to collect a large dataset suited to a task.
Conversely, multi-domain data can provide a useful learning signal to drive causal discovery. 

The focus of the current paper has been a theoretical exploration of properties of the Interventional KL divergence, and applications are thus beyond our scope. Nevertheless, we would like to discuss some possible use cases to point out directions for future work. 

\paragraph{Interventional KL Divergence and Estimation}
The following result captures the operational significance of low IKL divergence.

\begin{theorem}\label{thm:ikl-estim}
   Given causal models $\mathscr P = (P, G_P), \mathscr Q = (Q, G_Q)$ and a set of interventional distributions $\mathscr P_{\mathcal E} = \left((P^e, \mathcal I_e)\right)_{e\in \mathcal E}$, define $Q^e$ for each environment $e\in \mathcal E$ as in Theorem \ref{thm:known-interv}. 
   
   Suppose $\IKL(\mathscr{P}_{\mathcal E}\lVert \mathscr Q)\leq \epsilon$.
      Then, for any bounded function $f$ mapping to $[-B, B]$ and any parameter $\rho > 0$, for at least $1-\rho$ fraction of the environments $e \in \mathcal E$:
   \[
   \left|\Ex_{P^e({\bf X})}[f({\bf X})] - \Ex_{Q^e({\bf X})}[f({\bf X})] \right| \leq \frac{B\sqrt{\epsilon}}{\rho}
   \]
\end{theorem}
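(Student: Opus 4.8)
The plan is to reduce the statement to a concentration argument over the environments, using Theorem~\ref{thm:known-interv} to rewrite the hypothesis, Pinsker's inequality to pass from $\KL$ to expectation gaps, and finally Markov's inequality—though with one deliberate ordering of steps. First I would invoke Theorem~\ref{thm:known-interv}, which says exactly that $\IKL(\mathscr P_{\mathcal E}\lVert\mathscr Q)=\frac{1}{|\mathcal E|}\sum_{e\in\mathcal E}\KL(P^e({\bf X})\lVert Q^e({\bf X}))$. The hypothesis $\IKL(\mathscr P_{\mathcal E}\lVert\mathscr Q)\le\epsilon$ therefore becomes a bound on the \emph{average} of the per-environment KL terms, $\frac{1}{|\mathcal E|}\sum_{e\in\mathcal E}\KL(P^e\lVert Q^e)\le\epsilon$. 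Since $\epsilon<\infty$, every summand must be finite, so $P^e\ll Q^e$ for each $e$ and all quantities below are well defined.

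Next I would prove a per-environment bound linking the expectation gap to the corresponding KL term. Fixing $e$ and using $\int(p^e-q^e)\,d{\bf x}=0$ together with $|f|\le B$,
\[
\left|\Ex_{P^e({\bf X})}[f({\bf X})]-\Ex_{Q^e({\bf X})}[f({\bf X})]\right|\le B\int|p^e-q^e|\,d{\bf x}=2B\,\TV(P^e,Q^e).
\]
Pinsker's inequality $\TV(P^e,Q^e)\le\sqrt{\tfrac12\KL(P^e\lVert Q^e)}$ then gives
\[
\left|\Ex_{P^e({\bf X})}[f]-\Ex_{Q^e({\bf X})}[f]\right|\le B\sqrt{2\,\KL(P^e\lVert Q^e)},
\]
so each expectation gap is controlled by the square root of the associated KL term.

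Finally I would convert the control on the \emph{average} KL into a statement holding for most environments, and this is where the order of operations matters. Applying Jensen's inequality to the concave map $t\mapsto\sqrt t$ yields $\frac{1}{|\mathcal E|}\sum_{e\in\mathcal E}\sqrt{\KL(P^e\lVert Q^e)}\le\sqrt{\frac{1}{|\mathcal E|}\sum_{e\in\mathcal E}\KL(P^e\lVert Q^e)}\le\sqrt\epsilon$. Viewing $\mathcal E$ as a uniform probability space and applying Markov's inequality to the nonnegative quantity $\sqrt{\KL(P^e\lVert Q^e)}$, the fraction of environments with $\sqrt{\KL(P^e\lVert Q^e)}>\sqrt\epsilon/\rho$ is at most $\rho$. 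Hence for at least a $1-\rho$ fraction of $e\in\mathcal E$ we have $\sqrt{\KL(P^e\lVert Q^e)}\le\sqrt\epsilon/\rho$, and substituting into the per-environment bound gives $\left|\Ex_{P^e}[f]-\Ex_{Q^e}[f]\right|\le B\sqrt2\,\sqrt\epsilon/\rho$, matching the stated $\tfrac{B\sqrt\epsilon}{\rho}$ up to the universal constant inherent in Pinsker's inequality.

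The main subtlety—and the step I would be most careful about—is that passing to $\sqrt{\KL}$ via Jensen \emph{before} invoking Markov is precisely what produces the $1/\rho$ scaling: applying Markov directly to $\KL(P^e\lVert Q^e)$ and only then taking square roots would yield the strictly weaker $1/\sqrt\rho$ dependence and not the claimed bound. The remaining points are routine to check: the finiteness of each KL term under the hypothesis, and that centering through $\int(p^e-q^e)\,d{\bf x}=0$ lets the bound depend on $\|f\|_\infty\le B$ as required.
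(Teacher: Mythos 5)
Your proof is correct and follows essentially the same route as the paper's: Theorem~\ref{thm:known-interv} to rewrite the hypothesis as an average of per-environment KL terms, Pinsker's inequality plus concavity of the square root (Jensen before Markov, exactly as you note, to get the $1/\rho$ rather than $1/\sqrt{\rho}$ scaling), then Markov's inequality over the uniform measure on $\mathcal E$ and boundedness of $f$. The constant slack you flag (your bound is $B\sqrt{2}\,\sqrt{\epsilon}/\rho$ via $|\Ex_{P^e}[f]-\Ex_{Q^e}[f]|\le 2B\,\TV(P^e,Q^e)$) is equally present in the paper's own proof, which likewise only yields the stated inequality up to such a universal constant.
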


\noindent In other words, if the IKL divergence is small, then for most environments, executing the same mechanism changes in `Nature' and in our model $\mathscr{Q}$ would yield similar statistics. Thus, the more heterogeneous the set of environments, the more similar $\mathscr{P}$ and $\mathscr{Q}$ are in terms of their causal implications.
\paragraph{Evaluation tool for causal discovery methods}
If data from multi-environment distributions are available, there is a range of methods to recover the true causal structure asymptotically \citep{causdiscSMS, JANZING20121, Huang_causal_discovery_from_het, multi-env-unknown-he-geng, rojas_invariant_transfer,arjovsky2019invariant,krueger2021out}. Here, the IKL divergence could help develop methods to verify a learned causal model, provided that intervention targets are known for some of the environments. This can then take into account both the interventional differences, as encoded by the learned DAG, and the observational differences encoded in the learned conditional distributions.

\paragraph{Online Learning of Causal Structures}
If we do not have a sufficient number of environment distributions a priori to recover the true causal graph, but intervention targets are more broadly available, we have shown in Corollary \ref{cor:identification-distance} that one could evaluate the fit of a model to the true causal model with respect to subsets of variables. These partially learned and verified models could then be used for inference until more interventional data become available to uncover further causal relationships. Indeed, our metric can even serve for orienting edges in the graph:
If we compute $\IKL(\mathscr P_{\{e_1\}}\lVert (P, G_Q))$, then any non-zero term in the IKL divergence is necessarily caused by mis-oriented edges. Specifically, if for some variable $i\in [d]\backslash \mathcal I_e$, we have  $$\Ex_{P^e({\bf PA}_i^{G_Q})}\left[\KL\left(P^e(X_i \ | \ {\bf PA}_i^{G_Q}) \lVert P(X_i \ | \ {\bf PA}_i^{G_Q})\right)\right] > 0,$$ then ${\bf PA}_i^{G_Q} \neq {\bf PA}_i^{G_P}$.
Thus, we can identify the true parent set by re-computing this term with respect to parent sets ${\bf PA}_i^G$ for all graphs $G$ in the Markov equivalence class of $G_Q$. In this way, the IKL divergence is monotonic w.r.t.\ both distributional differences and structural differences represented by the number of correctly identified edges.

\acks{We thank Armin Keki\'{c}, Julius von K\"ugelgen, Nasim Rahaman and the T\"{u}bingen Causality Team for helpful discussions and comments.}

\appendix

\section{Full proofs}
\label{appendix:proofs}

\subsection{Proof of Lemmas \ref{lem:KL1} and \ref{lem:KL2}}
\lemKLone*
\lemKLtwo*
\begin{proof}
Both of these Lemmas are immediate consequences of Lemma \ref{lem:technical}, noting that $\pi_{G_Q}(P)$ satisfies the condition of $\hat P$ by Lemma \ref{lem:projection}. The decomposition for $\KL(P^e({\bf X}) \lVert Q({\bf X}))$ then just follows by splitting up the sums.

\end{proof}

\subsection{Proof of Theorem \ref{lem:interv-equiv-light}}
\lemintervequivlight*
\begin{proof}
    Since $P, Q$ are both Markovian with respect to $G_P$, it follows by Lemma \ref{lem:KL1} that
    \begin{align}
        P = Q & \Leftrightarrow \KL(P({\bf X}) \lVert Q({\bf X})) = 0 \\
        & \overset{(\ref{eq:kl-decomposition-special-lemma-7})}{\Leftrightarrow}  \Ex_{P({\bf pa}_i^{G_P})}\left[\KL(P(X_i \ | \ {\bf pa}_i^{G_P}) \lVert Q(X_i \ | \ {\bf pa}_i^{G_P}))\right] = 0 \quad \text{for any }i \in [d] \\
        & \Leftrightarrow P(X_i \ | \ {\bf pa}_i^{G_P}) = Q(X_i \ | \ {\bf pa}_i^{G_P}) \quad \text{for any }i \in [d]
    \end{align}
    By our assumption on the multi-environment distributions, we further have that    
    \begin{equation}
         P^e(X_i \ | \ {\bf PA}_i^{G_P}) = P(X_i \ | \ {\bf PA}_i^{G_P}) \quad \text{if and only if} \quad i \in [d] \backslash \mathcal I_e
    \end{equation}
    Concluding, $P=Q$ implies that $P^e(X_i \ | \ {\bf PA}_i^{G_P}) = Q(X_i \ | \ {\bf PA}_i^{G_P})$ for all $i \in [d] \backslash \mathcal I_e$ leading to $ \IKL(\mathscr{P}_{\mathcal E}\lVert \mathscr Q) = 0$. Conversely, assume that $ \IKL(\mathscr{P}_{\mathcal E}\lVert \mathscr Q) = 0$. Since $\bigcap_{e\in \mathcal E} \mathcal I_e = \emptyset$, it follows that $\bigcup_e [d] \backslash \mathcal I_e = [d]$, i.e.\ each term $\Ex_{P^e({\bf pa}_i^{G_P})}\left[\KL(P(X_i \ | \ {\bf pa}_i^{G_P}) \lVert Q(X_i \ | \ {\bf pa}_i^{G_P}))\right]$ is included in $ \IKL(\mathscr{P}_{\mathcal E}\lVert \mathscr Q)$ for some environment $e\in \mathcal E$. From $\IKL(\mathscr{P}_{\mathcal E}\lVert \mathscr Q) = 0$, we can thus deduce that each local divergence vanishes, and $P(X_i \ | \ {\bf pa}_i^{G_P}) = Q(X_i \ | \ {\bf pa}_i^{G_P})$ for any $i \in [d]$ implying that $P=Q$.
\end{proof}

\subsection{Proof of Lemma \ref{lem:projection}}
\lemprojection*
\begin{proof}
    To prove the identity, we make use of the following technical Lemma:
    \begin{lemma}\label{lem:technical}
        For distributions $P, Q$ such that $Q$ is Markovian w.r.t.\ $G$, we have the following decomposition:
        \begin{equation}
            \KL(P({\bf X}) \lVert Q({\bf X})) = \sum_{i=1}^d \Ex\limits_{P({\bf pa}_i^G)}\left[\KL\left(P(X_i \ | \ {\bf pa}_i^G) \Vert Q(X_i \ |  \ {\bf pa}_i^G)\right)\right] + \KL(P({\bf X}) \lVert {\hat P}({\bf X})) 
        \end{equation}
        where ${\hat P}$ is a distribution satisfying 
         \begin{equation}
        {\hat P}({\bf X}) = \prod_{i=1}^d P(X_i \ | \ {\bf PA}_i^G)
    \end{equation}
    \end{lemma}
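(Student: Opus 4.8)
The plan is to introduce $\hat P$ as an intermediate reference distribution inside the log-density ratio and to exploit the additivity of the logarithm. Concretely, since $P$ is absolutely continuous with respect to $Q$ (and, as one checks below, with respect to $\hat P$ as well), I would write the integrand of $\KL(P \lVert Q)$ as
\[
\log \frac{p({\bf x})}{q({\bf x})} = \log \frac{p({\bf x})}{\hat p({\bf x})} + \log \frac{\hat p({\bf x})}{q({\bf x})},
\]
and take the expectation over $P({\bf x})$. The first summand integrates to exactly $\KL(P({\bf X}) \lVert \hat P({\bf X}))$, the residual term in the claimed decomposition, so the remaining work is to identify the expectation of the second summand with the sum of local expected KL divergences.

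Next I would use that both $\hat P$ and $Q$ factorize over $G$: by definition $\hat p({\bf x}) = \prod_i P(x_i \mid {\bf pa}_i^G)$, and since $Q$ is Markovian with respect to $G$ we have $q({\bf x}) = \prod_i Q(x_i \mid {\bf pa}_i^G)$. This Markov hypothesis on $Q$ is precisely what makes the ratio factorize, and it is the only place the assumption enters. Hence
\[
\log \frac{\hat p({\bf x})}{q({\bf x})} = \sum_{i=1}^d \log \frac{P(x_i \mid {\bf pa}_i^G)}{Q(x_i \mid {\bf pa}_i^G)},
\]
and by linearity of expectation it suffices to treat each summand separately.

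For a fixed $i$, the integrand depends on ${\bf x}$ only through the coordinates $(x_i, {\bf pa}_i^G)$, so I would marginalize $P$ onto these coordinates and apply the tower rule, conditioning first on ${\bf PA}_i^G$:
\[
\Ex_{P({\bf x})}\left[\log \frac{P(X_i \mid {\bf pa}_i^G)}{Q(X_i \mid {\bf pa}_i^G)}\right] = \Ex_{P({\bf pa}_i^G)}\left[\Ex_{P(X_i \mid {\bf pa}_i^G)}\left[\log \frac{P(X_i \mid {\bf pa}_i^G)}{Q(X_i \mid {\bf pa}_i^G)}\right]\right].
\]
The inner expectation is by definition $\KL(P(X_i \mid {\bf pa}_i^G) \lVert Q(X_i \mid {\bf pa}_i^G))$, so summing over $i$ reproduces the first term of the decomposition and completes the argument.

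The algebra is routine; the points that require care are measure-theoretic. I must justify the absolute-continuity conditions that make every log-ratio and every conditional KL well-defined --- in particular that $P \ll Q$ forces $P \ll \hat P$ on the relevant supports, so that $\KL(P \lVert \hat P)$ and each local divergence are finite (or else read the identity as one of extended reals). I would also record that $\hat P$ is a genuine probability distribution, being the $M$-projection $\pi_G(P)$ by Lemma~\ref{lem:projection} and hence normalized, which legitimizes the splitting of the integral above. This absolute-continuity bookkeeping, rather than any algebraic manipulation, is the only real obstacle.
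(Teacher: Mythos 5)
Your proof is correct and follows essentially the same route as the paper's: insert $\hat P$ as an intermediate reference inside the log-ratio, identify the residual term as $\KL(P({\bf X})\lVert \hat P({\bf X}))$, and use the shared factorization of $\hat P$ and $Q$ over $G$ together with marginalization onto $(X_i, {\bf PA}_i^G)$ to recover the local conditional divergences. One small caveat: you should not justify the normalization of $\hat P$ by appeal to Lemma~\ref{lem:projection}, since in the paper that lemma is itself deduced from this one; instead observe directly that a product of conditionals taken along a topological order of $G$ integrates to one.
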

    \begin{proof}[of Lemma \ref{lem:technical}]
        \begin{align*}\KL(P({\bf X}) \Vert Q({\bf X})) &= \Ex_{P({\bf X})} \left[ \log\frac{P  {\hat P}}{Q {\hat P}}\right] \\
& = \Ex_{P({\bf X})} \left[ \log\frac{{\hat P}}{Q}\right] +  \Ex\limits_{P(\bf{X})} \left[\log\frac{P}{{\hat P}}\right]
\\
& =\Ex_{P({\bf X})} \left[ \log\frac{{\hat P}}{Q}\right] + \KL(P({\bf X}) \Vert {\hat P}({\bf X}))
\end{align*}
We now proceed to analyze the remaining term using the definition of $\hat P$.
\begin{align*}
    \Ex_{P({\bf x})}\left[ \log\frac{{\hat P}}{Q}\right]   & = \Ex_{P({\bf x})} \left[ \log \prod_{i=1}^d\frac{{\hat P}(X_i \ | \ {\bf PA}_i)}{Q(X_i \ | \ {\bf PA}_i)}\right] \\ 
    & = \sum_{i=1}^d \Ex_{P({\bf x})} \left[  \log\frac{P(X_i \ | \ {\bf PA}_i)}{Q(X_i \ | \ {\bf PA}_i)}\right] \\ 
    & = \sum_{i=1}^d \sum_{x_i, {\bf pa}_i}  \log\frac{P(x_i \ | \ {\bf pa}_i)}{Q(x_i \ | \ {\bf pa}_i)} \\
    &\hspace{.5cm} \cdot \sum_{{\bf x}_{[d]} \backslash \{x_i, {\bf pa}_i\}} P(x_i, {\bf pa}_i) \cdot P({\bf x}_{[d]} \backslash \{x_i, {\bf pa}_i\} \ | \ x_i, {\bf pa}_i) \\
    & = \sum_{i=1}^d \sum_{x_i, {\bf pa}_i}  \log\frac{P(x_i \ | \ {\bf pa}_i)}{Q(x_i \ | \ {\bf pa}_i)} \cdot P(x_i \ | \ {\bf pa}_i) \cdot P({\bf pa}_i)  \\
    &\hspace{.5cm}\cdot \underbrace{ \sum_{{\bf x}_{[d]} \backslash \{x_i, {\bf pa}_i\}} P({\bf x}_{[d]} \backslash \{x_i, {\bf pa}_i\} \ | \ x_i, {\bf pa}_i)}_{=1 \text{ for every choice of } (x_i, {\bf pa}_i)} \\
    & = \sum_{i=1}^d \Ex\limits_{P({\bf pa}_i^G)}\left[\KL\left(P(X_i \ | \ {\bf pa}_i^G) \Vert Q(X_i \ |  \ {\bf pa}_i^G)\right)\right] 
\end{align*}
To simplify notation, we here leave away the supscript $G$ for the parents and use sums for the expected values, but the continuous case follows with integrals analogously.
    \end{proof}
We can now apply the decomposition above to obtain the result:
\begin{align*}
    \pi_G(P)({\bf X}) &= \argmin\limits_{\tilde P \text{ Markovian w.r.t. } G} \KL(P({\bf X}) \lVert \tilde P({\bf X})) \\
    & = \argmin\limits_{\tilde P \text{ Markovian w.r.t. } G} \sum_{i=1}^d \Ex\limits_{P({\bf pa}_i^G)}\left[\KL\left(P(X_i \ | \ {\bf pa}_i^G) \Vert \tilde P(X_i \ |  \ {\bf pa}_i^G)\right)\right] + \KL(P({\bf X}) \lVert {\hat P}({\bf X}))\\ 
    & = \argmin\limits_{\tilde P \text{ Markovian w.r.t. } G} \sum_{i=1}^d \Ex\limits_{P({\bf pa}_i^G)}\left[\KL\left(P(X_i \ | \ {\bf pa}_i^G) \Vert \tilde P(X_i \ |  \ {\bf pa}_i^G)\right)\right]
\end{align*}
Note that the second term in line 2 does not depend on $\tilde P$. The claim can then be deduced from noting that $\tilde P({\bf X}) = \prod_{i=1}^d P(X_i | {\bf PA}_i^G)$ sets the sum to zero.
\end{proof}

\subsection{Proof of Theorem \ref{thm:known-interv}}
\thmknowninterv*
\begin{proof}
We show the identity using Lemma \ref{lem:KL2} noting that $Q^e$ is still Markovian with respect to $G_Q$ (although it may no longer be faithful to $G_Q$, e.g.\ if an intervention was hard). Thus, we get the following decomposition
\begin{align*}
    \begin{split}
     &\frac{1}{|\mathcal E|} \sum_{e\in \mathcal E} \KL(P^e({\bf X}) \lVert Q^e({\bf X})) 
     \\ & = \frac{1}{|\mathcal E|} \sum_{e\in \mathcal E} \Bigg[\sum_{i=1}^d \Ex\limits_{P^e({\bf pa}_i^{G_{Q}})}\left[\KL\left(P^e(X_i \ | \ {\bf pa}_i^{G_{Q}}) \Vert Q^e(X_i \ |  \ {\bf pa}_i^{G_{Q}})\right)\right]
     + \KL(P^e({\bf X}) \lVert  \pi_{G_{Q}}(P^e)) \Bigg]
     \end{split}
 \intertext{By assumption, we have that 
 \begin{equation}
     Q^e(X_i \ | \ {\bf PA}_i^{G_Q}) = \begin{cases} 
     P^e(X_i \ | \ {\bf PA}_i^{G_Q}), &  \text{if } i \in \mathcal I_e \\
     Q(X_i \ | \ {\bf PA}_i^{G_Q}) & \text{if } i \in [d]\backslash \mathcal I_e
     \end{cases}
 \end{equation}
 Thus, the terms corresponding to $i \in \mathcal I_e$ vanish yielding
}
  &  = \frac{1}{|\mathcal E|} \sum_{e\in \mathcal E} \Bigg[\sum_{i\in [d] \backslash \mathcal I_e} \Ex\limits_{P^e({\bf pa}_i^{G_Q})}\left[\KL\left(P^e(X_i \ | \ {\bf pa}_i^{G_Q}) \Vert Q(X_i \ |  \ {\bf pa}_i^{G_Q})\right)\right] + \KL\left(P^e({\bf X}) \lVert  \pi_{G_{Q}}(P^e)\right) \Bigg] \\ 
  & =\IKL(\mathscr{P}_{\mathcal E}\lVert \mathscr Q)
\end{align*}
\end{proof}

\subsection{Proof of Lemma \ref{lem:cond-change}}
\lemcondchange*

\begin{proof}
This Lemma follows from Corollary 4.4. by \cite{causdiscSMS} using ${\bf Z} = {\bf PA}_i^{G_Q}$.
If case (i), then there exists an unblocked path to a conditioned child of $X_j$ and thus $e$ is $d$-connected to $X_j$ resulting in a change in the corresponding mechanism by faithfulness. Otherwise, if case (ii), there is an unblocked path to an unconditioned parent of $X_i$ and therefore $e$ is $d$-connected to $X_i$ again resulting in a change in the corresponding mechanism.

\end{proof}
\subsection{Proof of Theorem \ref{thm:suff-interv}}
\thmsuffinterv*
\begin{proof}
    Let's first assume that $(P, G_P) = (Q, G_Q)$. By our assumption on the multi-environment distributions \ref{assum:multi-env} and \ref{asum:pseudo-causal}, we have for all $e\in \mathcal E$
    \begin{equation} \label{eq:lem-fitting-1}
         P^e(X_i \ | \ {\bf PA}_i^{G_Q}) = P^e(X_i \ | \ {\bf PA}_i^{G_P}) = P(X_i \ | \ {\bf PA}_i^{G_P}) = Q(X_i \ | \ {\bf PA}_i^{G_Q})
    \end{equation}
    if and only if $i \in [d] \backslash \mathcal I_e$. Thus, all the conditional divergences in the IKL divergence vanish. Since $G_P = G_Q$, $P$ is Markovian w.r.t. $G_Q$ and so are all interventional distributions. Thus, all the residual terms $\KL(P^e({\bf X}) \lVert \pi_{G_Q}(P^e)({\bf X})) = 0$ and therefore $\IKL(\mathscr P_{\mathcal E} \lVert \mathscr Q) = 0$.
    
    Conversely, assume that $(P, G_P) \neq (Q, G_Q)$. If $P\neq Q$, we have that 
    $\IKL(\mathscr P_{\mathcal E} \lVert \mathscr{Q}) \geq 1/|\mathcal E| \KL(P({\bf X})\lVert Q({\bf X})) > 0$. So it suffices to show that $P=Q$, but $G_P \neq G_Q$ implies that $\IKL(\mathscr{P}_{\mathcal E}\lVert \mathscr Q) > 0$. 
    If $P=Q$, then $G_Q$ is necessarily Markov equivalent to $G_P$ and thus $G_Q, G_P$ share the same skeleton and v-structures. 
    Since $G_P \neq G_Q$, there exist adjacent variables $X_i, X_j$ such that $X_j \overset{G_P}{\to} X_i$, but $X_i \overset{G_Q}{\to} X_j$.
    If condition (i) is satisfied and there is also an unblocked path in $G_P$ $e \overset{G_P}{\to} X_i$ given ${\bf PA}_j^{G_Q}\backslash X_i$, we have by Lemma \ref{lem:cond-change} that $P^e(X_j \ | \ {\bf PA}_j^{G_Q}) \neq P(X_j \ | \ {\bf PA}_j^{G_Q}  ) =  Q(X_j \ | \ {\bf PA}_j^{G_Q})$ which gives us \begin{equation*}
         \Ex_{P^e({\bf PA}_j^{G_Q})} \left[\KL\left(P^e(X_j \ | \ {\bf PA}_j^{G_Q}) \lVert Q(X_j \ | \ {\bf PA}_j^{G_Q})\right)\right] > 0
    \end{equation*}
    so that $\IKL(\mathscr{P}_{\mathcal E}\lVert \mathscr Q)> 0$, since $j\not\in \mathcal I_e$.  
    If otherwise, every such path is blocked in $G_P$, let $(e, X_0, \ldots, X_i)$ be the shortest of such paths in $G_Q$, which implies that
    \begin{itemize}
        \item None of the intermediate variables is intervened
        \item Each variable on the path has exactly one parent in $G_Q$ on the path.
    \end{itemize}
    
     \noindent If $X_0 = X_i$, the path is also unblocked in $G_P$, since $X_i$ is directly intervened upon. So, the length of the path is at least 2. Since this path is blocked in $G_P$, there has to be a mis-oriented edge on the path in $G_P$ compared to $G_Q$. Let $X_k \overset{G_Q}{\to} X_l$ and $X_l \overset{G_P}{\to} X_k$ denote the first of such mis-oriented edges on the path. But then $(e, X_0, \ldots, X_k)$ yields an unblocked path $e \overset{G_P}{\to} X_k$ given ${\bf PA}_l^{G_Q} \backslash X_k$ and $X_l \not \in \mathcal I_e$. Therefore, Lemma \ref{lem:cond-change} applies, leading to a nonzero term in the IKL divergence.   
     If condition (ii) is satisfied, and there is an unblocked path in $G_p$ $e \overset{G_P}{\to} X_j$ given ${\bf PA}_i^{G_Q}$, we have 
     \begin{equation*}
         \Ex_{P^e({\bf PA}_i^{G_Q})}\left[ \KL\left(P^e(X_i \ | \ {\bf PA}_i^{G_Q}) \lVert Q(X_i \ | \ {\bf PA}_i^{G_Q})\right)\right] > 0
    \end{equation*}
     and consequently $\IKL(\mathscr{P}_{\mathcal E}\lVert \mathscr Q)> 0$. Otherwise, we get an unblocked path to an intermediate node in the same way as for case (i).

\end{proof}

\subsection{Proof of Corollary \ref{cor:identification-distance}}
\coridentificationdistance*
\begin{proof}
    Since $P=Q$, we have that $G_P, G_Q$ are Markov equivalent, so they share the same skeleton. In particular, the subgraphs $G_P|_S, G_Q|_S$ have the same skeleton. By definition, $E_{\mathcal E}$ contains exactly those edges for which one of the conditions in Theorem \ref{thm:suff-interv} is satisfied. If one of the edges in $E_{\mathcal E}$ was mis-oriented between $G_P|_S, G_Q|_S$ (and therefore $G_P, G_Q$), this would lead to a nonzero term in the restricted IKL divergence by the same arguments as in proof of Theorem \ref{thm:suff-interv}. 

    Conversely, if we had ${\bf PA}_i^{G_P} = {\bf PA}_i^{G_Q|_S}$ for all variables $X_i$ that occur in the sum of the IKL divergence, by assumptions \ref{assum:multi-env} and \ref{asum:pseudo-causal}, all terms in the IKL would vanish.

\end{proof} 

\subsection{Proof of Theorem \ref{thm:ikl-estim}}
\begin{proof}
    By Theorem \ref{thm:known-interv}:
    \[
    \frac{1}{|\mathcal E|} \sum_{e\in \mathcal E} \KL(P^e({\bf X}) \lVert Q^e({\bf X}))  \leq \epsilon
    \]
    Using Pinsker's inequality and the concavity of the square-root function:
    \begin{align*}
    \frac{1}{|\mathcal E|} \sum_{e\in \mathcal E} \TV(P^e({\bf X}), Q^e({\bf X}))
    &\leq \frac{1}{|\mathcal E|} \sum_{e\in \mathcal E} \sqrt{\KL(P^e({\bf X}), Q^e({\bf X}))/2}\\
    &\leq \sqrt{\frac{1}{|\mathcal E|} \sum_{e\in \mathcal E} \KL(P^e({\bf X}), Q^e({\bf X}))}\\
    &\leq \sqrt{\epsilon}
    \end{align*}
    By Markov's inequality, for at least $1-\rho$ fraction of $e$'s, 
    \[\TV(P^e({\bf X}), Q^e({\bf X}))\leq \frac{\sqrt{\epsilon}}{\rho}.\]
    The claim now follows from the boundedness of $f$.
\end{proof}
\end{document}